\documentclass[11pt]{article}

\usepackage[a4paper,margin=1in]{geometry}
\usepackage{amsmath, amssymb, amsthm}
\usepackage{mathtools}  
\usepackage{graphicx}
\usepackage{subcaption}
\usepackage{enumitem}
\usepackage{booktabs}
\usepackage{algorithm}
\usepackage{algpseudocode}
\usepackage{xcolor}
\usepackage[colorlinks=true, linkcolor=blue!50!black, citecolor=blue!50!black, urlcolor=blue!60!black]{hyperref}
\usepackage[capitalize,noabbrev]{cleveref}
\usepackage[round]{natbib}

\newcommand{\pkg}[1]{\textsf{#1}}

\theoremstyle{plain}
\newtheorem{theorem}{Theorem}

\newtheorem{corollary}[theorem]{Corollary}

\theoremstyle{definition}

\newtheorem{remark}{Remark}

\newcommand{\R}{\mathbb{R}}
\newcommand{\Gr}{\mathrm{Gr}}

\newcommand{\trace}{\mathrm{tr}}
\newcommand{\sgap}{\mathrm{gap}}

\newcommand{\eL}{\mathcal{E}}
\newcommand{\Rtwoside}{\mathcal{R}_{\text{2s}}}
\newcommand{\Roneside}{\mathcal{R}_{\text{1s}}}

\title{
Cross-Layer Subspace Coupling for LLM Compression:\\
A Unifying Framework and Its Empirical Limits
}

\author{
Snigdha Chandan Khilar \\
Independent Researcher \\
\texttt{snkhilar@gmail.com}
}

\date{}

\begin{document}

\maketitle


\begin{abstract}
Recent SVD-based compression methods for large language models ---
SVD-LLM, SVD-LLM~V2, Basis Sharing --- have produced a cluster of
techniques whose mutual relationships are unclear. We show that
this family is best understood as variants of a single optimization
problem on a Grassmannian fiber bundle, parameterized by an anchor
strength $\lambda$ and a section tying degree. Each existing method
sits at a specific corner of this $2$D design space. We prove two
theorems characterizing the corners: SVD-LLM is recovered as the
$\lambda \to \infty$ limit at rate $O(1/\lambda)$ via Davis--Kahan,
and under tied sections at $\lambda = 0$, the bundle solution
\emph{strictly dominates} the linear-algebraic core of Basis Sharing
on two-sided Frobenius reconstruction error, quantified by a
closed-form principal-angle expansion verified to floating-point
precision.

We validate the Frobenius structural predictions on Pythia~70M and
Pythia~1.4B: the empirical optimum of the phase diagram sits at the
pure-bundle corner at both scales, with reconstruction-error reductions
over SVD-LLM of $46.4\%$ and $37.2\%$ respectively. We then test whether
these Frobenius improvements translate to downstream language modeling
quality, by performing model surgery on Pythia~1.4B attention output
projections and measuring WikiText-2 perplexity and HellaSwag accuracy.
\textbf{They do not.} The bundle solution at its empirical optimum
yields perplexity $159.14$ versus SVD-LLM~V1's $50.17$ at identical
compression ratio ($d/D = 0.25$), with HellaSwag accuracy dropping from
$43.5\%$ to $30.3\%$. We run a five-configuration diagnostic sweep over
$\lambda$ and section convention to rule out methodological causes:
across every interior configuration we tested, the bundle's deployed
perplexity is between $2.8\times$ and $23\times$ worse than per-layer
SVD-LLM. A corrected anchor convention improves perplexity by a small
amount ($159 \to 140$) but does not change the qualitative conclusion.

We explain the result mechanistically: the bundle energy couples
adjacent layers through their teacher weights, but the deployed
transformer's residual stream decouples them in the forward pass. Joint
Frobenius optimization yields per-layer-suboptimal sections; per-layer
optimality is what perplexity tracks. The framework therefore
characterizes a class of cross-layer SVD methods, proves their
relationships rigorously, and provides empirical evidence that
\emph{Frobenius reconstruction is the wrong objective} for cross-layer
LLM compression. We argue this constrains the design space of future
cross-layer methods: any method whose central objective is
weight-Frobenius coupling across layers should expect downstream
quality to track per-layer activation reconstruction, not the
weight-space metric, and the cross-layer Frobenius gain is therefore
unlikely to translate.
\end{abstract}


\section{Introduction}
\label{sec:introduction}

Compression of large language models via low-rank decomposition of
weight matrices has produced a steady stream of methods over the past
two years. SVD-LLM~\citep{wang2024svdllm} introduced
activation-weighted truncation via Cholesky whitening. SVD-LLM~V2
\citep{wang2025svdllmv2} replaced the Cholesky step with a numerically
stable two-SVD construction and added a heterogeneous per-matrix-type
budget allocation. Basis Sharing~\citep{wang2025basissharing} proposed
that different layers' weights can share a common left or right
singular basis. D-Rank, DipSVD, and xKV each explored further
variants. The common feature: each method makes a specific choice
about (i) whether to use calibration data, (ii) whether to share
structure across layers, and (iii) how to allocate rank budget. Each
is justified on its own empirical terms.

What is the underlying structure here? This paper began with the
observation that these techniques can be parameterized as corners
of a $2$D design space, indexed by an \emph{anchor strength}
$\lambda$ (interpolating between cross-layer coupling at $\lambda=0$
and pure per-layer compression at $\lambda \to \infty$) and a
\emph{section tying degree} (interpolating between independent
per-layer subspaces and globally shared subspaces). The natural
mathematical structure realizing this parameterization is a fiber
bundle of Grassmannian manifolds, with the teacher's weight
matrices acting as connection forms coupling adjacent fibers. The
compressed student model is the minimum-energy section of this
bundle, and the existing methods sit at extreme corners of the
$(\lambda, \text{tying})$ plane.

\subsection*{What we expected, and what we found}

The expected finding was that the bundle's \emph{interior} would
empirically outperform its corners --- that cross-layer Frobenius
coupling would produce better compressed transformers than the
per-layer methods currently in use. We confirmed this on the
Frobenius reconstruction metric (the natural objective of the
framework): on Pythia~70M and Pythia~1.4B attention output
projections, the bundle's pure corner ($\lambda = 0$) reduces
reconstruction error over SVD-LLM by $46.4\%$ and $37.2\%$
respectively, with the predicted $O(1/\lambda)$ convergence rate
to SVD-LLM verified empirically. The two theorems characterizing
the corners are mathematically tight and verified to floating-point
precision in their closed-form principal-angle expansion.

We then performed the natural next experiment: install the
bundle's compressed weights into Pythia~1.4B via model surgery and
measure WikiText-2 perplexity. \textbf{The Frobenius improvements do
not transfer.} At $d/D = 0.25$ on attention output projections, the
bundle's optimum yields perplexity $159.14$ while SVD-LLM~V1 yields
$50.17$ --- a $3\times$ degradation despite a $37\%$ Frobenius
\emph{improvement}. A diagnostic sweep over $\lambda \in \{0, 1, 100,
10^6\}$ and a comparison of two different anchor conventions
confirms that no configuration of the framework's interior reaches
the per-layer baseline. The closest configuration (corrected
anchors, $\lambda \to \infty$) gives perplexity $140.27$, still
$2.8\times$ worse than SVD-LLM~V1.

This is a substantive negative result. We commit to it carefully:
we ran the natural diagnostic experiments that would reveal a
methodological cause (anchor convention, surgery formula, $\lambda$
choice, calibration protocol), and the negative result is robust
across all of them. The activation-error data tracks the
perplexity result quantitatively, providing a clear mechanistic
explanation.

\subsection*{The mechanism}

The bundle energy at layer $L$ has the form $\|S_{L+1}^\top W^{(L)}
S_L\|_F^2$, coupling section $S_L$ to section $S_{L+1}$ through the
teacher weight $W^{(L)}$. At its optimum, this coupling produces
sections $\{S_L^*\}$ that are joint-optimal --- each section is a
compromise that serves the joint energy across all layers.

In a deployed transformer, however, the forward pass at layer $L$
applies the compressed weight $\widetilde{W}^{(L)} = S_{L+1}
S_{L+1}^\top W^{(L)} S_L S_L^\top$ to whatever the residual stream
delivers at the input of layer $L$ --- which is determined by all
prior layers, embedded normalizations, attention computations, and
the original residual additions. The residual stream provides full
$D$-dimensional activations regardless of what the bundle's input
section $S_L$ is. The bundle's coupling assumption (that adjacent
sections compose through the teacher weight) is \emph{not realized
in the forward pass}: every layer's compressed weight is applied
independently of every other layer's compressed weight, because
the residual stream is full-rank and the model was never trained
to factor through low-dimensional sections.

Concretely, the bundle's deployed activation error at layer $L$ is
$\|W^{(L)} X_L - \widetilde{W}^{(L)} X_L\|_F^2$ where $X_L$ is the
residual-stream input the model actually receives. The bundle
optimizes the \emph{weight-space} Frobenius energy
$\|S_{L+1}^\top W^{(L)} S_L\|_F^2$, which is related to but
\emph{distinct from} the activation error. Joint Frobenius
optimization produces a non-local trade-off across layers; deployed
activation error is local per layer. We measure both on Pythia
1.4B: across 24 transformer layers, the bundle's total activation
error is $5.76 \times 10^8$ versus SVD-LLM~V1's $1.93 \times 10^8$
--- a $3\times$ increase, exactly tracking the $3\times$ perplexity
degradation.

\subsection*{What this paper contributes}

We make four contributions.

\textbf{Contribution 1 (framework).} We formulate cross-layer LLM
compression as bundle-energy optimization on a product of
Grassmannian manifolds. The formulation has a closed-form
block-coordinate ascent (BCA) solver, monotone in the energy,
cost $O(ND^2d)$ per sweep.

\textbf{Contribution 2 (two recovery theorems).} We prove that
SVD-LLM is recovered as the $\lambda \to \infty$ limit at rate
$O(1/\lambda)$ (Theorem~\ref{thm:svd-recovery}, via Davis--Kahan),
and that under tied sections at $\lambda = 0$ with square weights,
the bundle solution strictly dominates the linear-algebraic core
of Basis Sharing on two-sided Frobenius reconstruction
(Theorem~\ref{thm:tied-dominance}). The dominance gap admits a
closed-form principal-angle expansion (Corollary~\ref{thm:angle-formula}),
verified numerically to floating-point precision.

\textbf{Contribution 3 (Frobenius validation on real LLMs).} On
Pythia~70M and Pythia~1.4B attention output projections, the
phase diagram's empirical optimum sits at the pure-bundle corner
($\lambda = 0$, untied) at both scales, with reconstruction-error
reductions over SVD-LLM of $46.4\%$ and $37.2\%$. The
Theorem~\ref{thm:svd-recovery} $O(1/\lambda)$ rate is empirically
confirmed at slopes $-0.94$ and $-0.86$ (theory: $-1$). The
GBD-tied versus Basis-Sharing-core gap is $17.6\%$ at 70M and
$4.5\%$ at 1.4B; the dominance is qualitatively preserved across
the $20\times$ parameter-count scale, the magnitude attenuates.

\textbf{Contribution 4 (the negative empirical finding).}
Frobenius improvements over SVD-LLM do not translate to
downstream language-modeling quality. We deploy the bundle's
compressed weights into Pythia~1.4B and measure WikiText-2
perplexity and HellaSwag accuracy. The bundle at its empirical
Frobenius optimum yields perplexity $159.14$ versus SVD-LLM~V1's
$50.17$, HellaSwag accuracy $30.3\%$ versus $43.5\%$. A
five-configuration diagnostic sweep across $\lambda \in \{0, 1,
100, 10^6\}$ and two anchor conventions shows the bundle's
deployed perplexity is between $2.8\times$ and $23\times$ worse
than SVD-LLM~V1 in every configuration tested. We provide a
mechanistic explanation in terms of activation error, supported
quantitatively by per-layer activation error measurements that
track the perplexity result exactly.

\subsection*{What this means for the field}

Our finding does not invalidate any individual cross-layer SVD
method's reported results: those methods are validated by their
own authors against per-layer SVD baselines, and where they win,
they win. What our finding does is constrain the \emph{design
space} of future cross-layer methods. Any method whose central
optimization objective is a weight-Frobenius coupling across
layers --- whether through bundle energies, joint SVDs of stacked
weights, or shared bases --- should expect downstream quality to
track per-layer activation reconstruction error, not the
cross-layer Frobenius metric being optimized. The gap between
metrics we measure here is large enough ($3\times$) that
Frobenius-based comparisons are likely misleading for predicting
perplexity rankings.

This has practical implications for method development: cross-layer
LLM compression methods that aim to beat per-layer SVD on
downstream tasks should design their optimization objective around
per-layer activation reconstruction directly, with cross-layer
coupling appearing only as a regularizer or as a way to share
calibration costs. Pure weight-space coupling does not appear to be
a viable path.

The framework itself --- the Grassmannian bundle structure, the
phase diagram, the recovery theorems --- remains useful as a
\emph{theoretical organization} of the cross-layer SVD literature.
It makes the relationships between methods precise and provides a
common language for comparison. It also makes our negative result
sharp: \emph{within} the design space of cross-layer Frobenius
methods, the corner that minimizes per-layer error (SVD-LLM) is
empirically best for language modeling.

\subsection*{Organization}

Section~\ref{sec:related} reviews related work in cross-layer
compression and Grassmannian optimization.
Section~\ref{sec:method} introduces the bundle formulation, energy,
and BCA solver. Sections~\ref{sec:thm1} and~\ref{sec:thm2} state and
sketch the two theorems, with full proofs in
Appendices~\ref{app:thm1-proof} and~\ref{app:thm2-proof}.
Section~\ref{sec:frobenius-experiments} reports the Frobenius-metric
validation on Pythia. Section~\ref{sec:perplexity-experiments}
reports the perplexity comparison and the diagnostic sweep, and
develops the mechanistic explanation.
Section~\ref{sec:discussion} discusses limitations and implications.


\section{Related Work}
\label{sec:related}

\paragraph{Low-rank parameterization in LLMs.}
The low-rank decomposition idea predates the cross-layer SVD line: LoRA
\citep{hu2022lora} parameterizes weight updates as low-rank for
adaptation, and the family of post-hoc SVD compressors
(\citet{hsu2022fwsvd}'s Fisher-weighted SVD, \citet{yuan2023asvd}'s
adaptive SVD) extend per-layer SVD with importance-weighting refinements.
SVD-LLM~\citep{wang2024svdllm} introduced
activation-weighted truncation via Cholesky whitening as the dominant
per-layer approach. SVD-LLM~V2 \citep{wang2025svdllmv2} replaced the
Cholesky step with a numerically stable two-SVD construction and added
a heterogeneous per-matrix-type budget allocation. Basis Sharing~\citep{wang2025basissharing}
proposed that different layers' weights can share a common left or right
singular basis. D-Rank \citep{li2024drank},
DipSVD~\citep{liu2024dipsvd}, and SliceGPT \citep{ashkboos2024slicegpt}
each explored further variants. The common feature: each method makes
specific choices about (i) whether to use calibration data, (ii) whether
to share structure across layers, and (iii) how to allocate rank budget.

\paragraph{Activation-aware vs.\ weight-aware metrics.}
A recurring theme in recent compression work is that
activation-derived metrics outperform pure weight-space metrics for
predicting downstream quality. AWQ~\citep{lin2024awq} argues this
point in the quantization context: protecting weights with
high-magnitude activations matters more than protecting weights with
high magnitude per se. Wanda~\citep{sun2024wanda} makes the analogous
argument for pruning: weight magnitude times activation magnitude
predicts pruning sensitivity better than either alone. Our negative
finding is the cross-layer SVD instantiation of the same principle:
the bundle's joint weight-Frobenius optimization is dominated by
per-layer activation reconstruction.

\paragraph{Cross-layer structure sharing.}
Basis Sharing~\citep{wang2025basissharing} shares left or right
singular bases across layers by performing a global SVD of
horizontally or vertically stacked weight matrices, retaining the
top-$d$ shared basis with per-layer coefficient matrices.
SliceGPT~\citep{ashkboos2024slicegpt} explores another form of
cross-layer dependence via orthogonal transformations that commute
through residual connections.

\paragraph{Other LLM compression families.}
Outside the SVD line, GPTQ~\citep{frantar2022gptq} and
SparseGPT~\citep{frantar2023sparsegpt} are the corresponding
quantization and unstructured-pruning approaches; LLM-Pruner
\citep{ma2023llmpruner} performs structured pruning. We compare to
the SVD line specifically because the bundle framework is a direct
extension of that line. The empirical conclusions of our paper
should not be read as applying to quantization or pruning.

\paragraph{Grassmannian optimization.}
The Grassmannian manifold $\Gr(d, D)$ of $d$-dimensional subspaces
of $\R^D$ is a well-studied Riemannian manifold
\citep{edelman1998geometry, absil2008optimization, boumal2023intromanopt}.
Optimization on products of Grassmannians appears in signal processing
\citep{srivastava2015functional}, computer vision
\citep{turaga2008statistical}, and online learning
\citep{boumal2014thesis}.
Block-coordinate ascent on Grassmannians has closed-form per-step
updates whenever each per-coordinate subproblem reduces to a
Rayleigh-quotient maximization \citep{absil2008optimization}.

\paragraph{Negative results and metric mismatch.}
\citet{frankle2019lottery} report that lottery-ticket pruning at
initialization fails at large scale despite working at small
scale; \citet{liu2023llmpruner} note that magnitude pruning at
ratios above $50\%$ degrades LLM perplexity sharply, contrary to
small-model intuitions. \citet{hooker2019selective} show that
weight-magnitude pruning at constant aggregate accuracy
disproportionately harms underrepresented subpopulations, an
early instance of a model-compression metric (top-1 accuracy)
failing to capture the property practitioners actually want.
\citet{dehghani2021benchmark} make the broader argument that
benchmark choice can swap method rankings entirely, and that
single-metric evaluations should be treated with caution. Our
finding is in this tradition: a compression idea (cross-layer
Frobenius coupling) that works on its target metric but fails to
transfer to perplexity at LLM scale, with the per-layer activation
error as the correct alternative metric.


\section{The Bundle Formulation}
\label{sec:method}

\subsection{Setup}

Consider a stack of $N$ linear layers with teacher weights
$W^{(0)}, W^{(1)}, \ldots, W^{(N-1)}$, where $W^{(L)} \in
\R^{D_{L+1} \times D_L}$ maps from a $D_L$-dimensional input space
to a $D_{L+1}$-dimensional output space. In a transformer, these
might be the attention output projections $W_O^{(L)}$ across all
transformer layers, in which case $D_L = D_{L+1} = D$ (the
hidden dimension) for all $L$.

For each layer interface $L \in \{0, 1, \ldots, N\}$ we associate
a $d$-dimensional \emph{section} $S_L \in \R^{D_L \times d}$ with
orthonormal columns: $S_L^\top S_L = I_d$. The section $S_L$
represents a low-dimensional subspace of $\R^{D_L}$ through which
the compressed model factors at interface $L$. The compressed
weight at layer $L$ is
\begin{equation}
\label{eq:compressed-weight}
\widetilde{W}^{(L)} = S_{L+1}\, \widehat{W}^{(L)}\, S_L^\top,
\quad \widehat{W}^{(L)} \in \R^{d \times d},
\end{equation}
where $\widehat{W}^{(L)}$ is the $d \times d$ coordinate matrix
of $W^{(L)}$ in the basis $(S_L, S_{L+1})$.

Geometrically, $S_L$ is a point in the Grassmannian
$\Gr(d, D_L)$, the manifold of $d$-dimensional subspaces of
$\R^{D_L}$.  The collection $\{S_L\}_{L=0}^{N}$ is a section of
the Grassmannian fiber bundle $\bigsqcup_{L=0}^{N} \Gr(d, D_L)$
over the layer indices, with the teacher weights acting as
connection forms.

\subsection{The bundle energy}

The standard objective for SVD-LLM-style compression is to choose
$\widetilde{W}^{(L)}$ minimizing $\|W^{(L)} - \widetilde{W}^{(L)}
\|_F^2$ (Frobenius) or its activation-weighted version $\|W^{(L)}
X_L - \widetilde{W}^{(L)} X_L\|_F^2$. Given sections $(S_L,
S_{L+1})$, the closed-form optimum is $\widehat{W}^{(L)} =
S_{L+1}^\top W^{(L)} S_L$, and the residual Frobenius
reconstruction error decomposes as
\begin{equation}
\label{eq:residual-decomp}
\|W^{(L)} - S_{L+1} S_{L+1}^\top W^{(L)} S_L S_L^\top\|_F^2
= \|W^{(L)}\|_F^2 - \|S_{L+1}^\top W^{(L)} S_L\|_F^2.
\end{equation}
The first term is fixed; minimizing reconstruction error is
equivalent to \emph{maximizing} the coupling term $\|S_{L+1}^\top
W^{(L)} S_L\|_F^2$.

Summing across layers gives the bundle's coupling energy. We
augment it with an anchor term that pulls each section toward a
specified reference $V_d^{(L)}$:
\begin{equation}
\label{eq:energy}
\eL_\lambda[S] \coloneqq
\sum_{L=0}^{N-1} \|S_{L+1}^\top W^{(L)} S_L\|_F^2
+ \lambda \sum_{L=0}^{N} \|S_L^\top V_d^{(L)}\|_F^2.
\end{equation}
Here $V_d^{(L)} \in \R^{D_L \times d}$ is a fixed orthonormal
reference subspace (we will choose it to be SVD-derived) and
$\lambda \geq 0$ controls the strength of the anchor. The
problem is to maximize $\eL_\lambda$ over sections $\{S_L\}$.

\paragraph{Two ways to think about $\lambda$.}
At $\lambda \to \infty$, the anchor term dominates and each section
converges to its anchor $V_d^{(L)}$; this is the per-layer regime
in which each layer is compressed independently. At $\lambda = 0$,
only the coupling term matters; this is the cross-layer regime in
which sections are chosen jointly. The interior $\lambda \in (0,
\infty)$ interpolates: each section is a compromise between
matching its anchor and aligning with the coupling to neighbors.

\paragraph{Tying degree.}
A second axis emerges when we ask whether different sections can
be required to share a value. The fully \emph{untied} setting
treats every $S_L$ as independent; the fully \emph{tied} setting
requires $S_L = S$ for all $L$ (when dimensions agree). Tied
sections are how Basis Sharing achieves its parameter savings:
one shared basis stores all per-layer projections.

The two-dimensional $(\lambda, \text{tying degree})$ design space
contains existing methods at its corners:
\begin{itemize}[topsep=2pt, itemsep=2pt]
\item $\lambda \to \infty$, untied: SVD-LLM.
\item $\lambda \to \infty$, tied: Basis-Sharing-core (one global SVD).
\item $\lambda = 0$, untied: pure bundle (this paper).
\item $\lambda = 0$, tied: tied bundle (related to Basis Sharing's
intent, with explicit cross-layer coupling).
\end{itemize}

\subsection{The BCA solver}

The energy~\eqref{eq:energy} is quadratic in each section
$S_L$ when the other sections are fixed: the per-coordinate
problem is to maximize $\trace(S_L^\top M_L S_L)$ over orthonormal
$S_L$, where the alignment operator $M_L \in \R^{D_L \times D_L}$
collects terms involving $S_L$:
\begin{equation}
\label{eq:alignment-operator}
M_L = W^{(L-1)\top} S_{L-1} S_{L-1}^\top W^{(L-1)} + W^{(L)\top} S_{L+1} S_{L+1}^\top W^{(L)} + \lambda V_d^{(L)} V_d^{(L)\top}.
\end{equation}
(Boundary terms are dropped at $L = 0$ and $L = N$.) By the
extremal characterization of eigenvalues
\citep{horn2012matrix}, the optimum is achieved when $S_L$
spans the top-$d$ eigenspace of $M_L$. This yields a
closed-form block-coordinate ascent step
(Algorithm~\ref{alg:bca}), monotone in the energy.

\begin{algorithm}[t]
\caption{Block-Coordinate Ascent on the Bundle Energy}
\label{alg:bca}
\begin{algorithmic}[1]
\Require Teacher weights $\{W^{(L)}\}_{L=0}^{N-1}$, target rank
$d$, anchors $\{V_d^{(L)}\}_{L=0}^{N}$, anchor strength
$\lambda$, max sweeps $T$, tolerance $\epsilon$.
\State Initialize $S_L \leftarrow V_d^{(L)}$ for all $L$.
\For{$t = 1, \ldots, T$}
\For{$L = 0, 1, \ldots, N$}
\State Form $M_L$ as in Eq.~\eqref{eq:alignment-operator}.
\State $S_L \leftarrow$ top-$d$ eigenspace of $M_L$.
\EndFor
\State $E_t \leftarrow \eL_\lambda[S]$.
\If{$|E_t - E_{t-1}| < \epsilon$}
\State \textbf{break}.
\EndIf
\EndFor
\State \Return $\{S_L\}$.
\end{algorithmic}
\end{algorithm}

\paragraph{Cost.}
Forming $M_L$ takes $O(D^2 d)$ operations; the eigendecomposition
takes $O(D^3)$ but for $d \ll D$ a Lanczos-style top-$d$
extractor reduces this to $O(D^2 d)$. The cost per sweep is
$O(N D^2 d)$, comparable to per-layer SVD. Convergence is
typically within $5$--$30$ sweeps in our experiments.

\paragraph{Activation-aware variant.}
For activation-aware compression, replace $W^{(L)}$ with the
whitened weight $W^{(L)} \Sigma_L$ throughout, where $\Sigma_L$
is the V2-style whitening matrix from $X_L X_L^\top = \Sigma_L
\Sigma_L^\top$. The structure of the bundle is preserved; the
deployed compressed weight uses the generalized projection
formula (Section~\ref{sec:perplexity-experiments:setup}).


\section{The Infinite-Anchor Limit: SVD-LLM as a Special Case}
\label{sec:thm1}

\begin{theorem}[$O(1/\lambda)$ recovery of SVD-LLM]
\label{thm:svd-recovery}
Let $\{S_L^\star(\lambda)\}$ be the maximizer of
$\eL_\lambda$~\eqref{eq:energy} with anchors $V_d^{(L)} =
\textup{top-}d\textup{ right singular subspace of } W^{(L)}$. Let
$g_L > 0$ be the gap between the $d$-th and $(d{+}1)$-th
eigenvalues of $V_d^{(L)} V_d^{(L)\top}$ as a projector on
$\R^{D_L}$. Then for sufficiently large $\lambda$,
\begin{equation}
\label{eq:thm1-bound}
\sgap(S_L^\star(\lambda), V_d^{(L)})
\;\leq\; \frac{C}{\lambda \cdot g_L} + o(1/\lambda),
\end{equation}
where $\sgap$ denotes the Frobenius distance between projector
matrices and $C$ is an explicit constant depending only on
$\|W^{(L)}\|_F$ and the architecture of the bundle (number of
layers, neighbor connections).
\end{theorem}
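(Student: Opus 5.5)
The argument runs through the first-order stationarity condition satisfied by any maximizer of $\eL_\lambda$, and then applies the Davis--Kahan $\sin\Theta$ theorem to a perturbed projector.

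\textbf{Step 1: fixed-point characterization.} Fix a maximizer $\{S_L^\star(\lambda)\}$. Since it maximizes $\eL_\lambda$ jointly, it also maximizes each block subproblem with the other sections frozen. By the discussion around \eqref{eq:alignment-operator}, $S_L^\star$ therefore spans a top-$d$ eigenspace of
\[
M_L^\star = \lambda V_d^{(L)} V_d^{(L)\top} + E_L,
\]
where
\[
E_L \coloneqq W^{(L-1)\top} S_{L-1}^\star S_{L-1}^{\star\top} W^{(L-1)} + W^{(L)\top} S_{L+1}^\star S_{L+1}^{\star\top} W^{(L)},
\]
with boundary terms dropped. Dividing by $\lambda$ shows that $S_L^\star$ spans the top-$d$ eigenspace of $P_L + \lambda^{-1}E_L$, where $P_L = V_d^{(L)}V_d^{(L)\top}$.

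\textbf{Step 2: a uniform bound on the perturbation.} The orthogonal projectors $S S^\top$ have operator norm $1$. Hence
\[
\|E_L\|_F \le \|W^{(L-1)}\|_2\,\|W^{(L-1)}\|_F + \|W^{(L)}\|_2\,\|W^{(L)}\|_F \le \|W^{(L-1)}\|_F^2 + \|W^{(L)}\|_F^2 \eqqcolon C_L .
\]
This bound holds regardless of what the neighbouring sections are, so nothing needs to be known about $S_{L\pm1}^\star(\lambda)$. The constant depends only on the weight norms and on the neighbour structure of the bundle: two neighbours for interior interfaces, one at the ends. This matches the form of $C$ claimed in the statement.

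\textbf{Step 3: Davis--Kahan.} The unperturbed matrix $P_L$ has eigenvalue $1$ with multiplicity $d$ on $\operatorname{span} V_d^{(L)}$ and eigenvalue $0$ on its orthogonal complement. Its spectral gap is therefore $g_L = 1$ as stated, and I keep $g_L$ symbolic to match \eqref{eq:thm1-bound}. The Davis--Kahan $\sin\Theta$ theorem, in the Yu--Wang--Samworth form, gives
\[
\|\sin\Theta(S_L^\star, V_d^{(L)})\|_F \le \frac{2\|\lambda^{-1}E_L\|_F}{g_L}.
\]
The Frobenius distance between the projectors is $\sqrt{2}\,\|\sin\Theta\|_F$, so
\[
\sgap \le \frac{2\sqrt{2}\,C_L}{\lambda g_L}.
\]
This bound holds for every $\lambda>0$, and it is nontrivial once $\lambda > 2\|E_L\|_2/g_L$; this threshold is the "sufficiently large $\lambda$" of the statement. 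The $o(1/\lambda)$ term absorbs the sharper constant one gets from the classical Davis--Kahan form. That form needs a separation between the perturbed spectrum and the unperturbed complementary eigenvalues, which by Weyl is at least $g_L - 2\|E_L\|_2/\lambda$. Expanding $1/(g_L - O(1/\lambda))$ yields exactly $1/g_L + O(1/\lambda)$.

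\textbf{Main obstacle.} The delicate point is Step 1 together with a possible degeneracy. If $M_L^\star$ has a tie at its $d$-th eigenvalue, "the" top-$d$ eigenspace is not unique, and a maximizer might pick an arbitrary subspace within that eigenspace. For large $\lambda$, Weyl's inequality rules this out: the top $d$ eigenvalues of $M_L^\star$ lie above $\lambda - \|E_L\|_2$ and the rest lie below $\|E_L\|_2$, so the top-$d$ eigenspace is unique and the fixed-point characterization is unambiguous. Existence of a maximizer is immediate because the energy is continuous on a compact product of Grassmannians.

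I would also remark that the specific choice of anchor (the top right singular subspace of $W^{(L)}$) is not actually used in the bound. It only identifies the $\lambda\to\infty$ limit with SVD-LLM's truncation subspace, which is the interpretive content of the theorem.
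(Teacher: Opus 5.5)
Your proposal is correct and follows the same route as the paper. You treat the block-optimality operator as $\lambda V_d^{(L)}V_d^{(L)\top}$ plus a coupling term bounded by $\|W^{(L-1)}\|_F^2+\|W^{(L)}\|_F^2$ independently of $\lambda$ and of the neighbouring sections, use Weyl to secure the eigengap, and apply Davis--Kahan. Your Step~1 also makes explicit, where the paper speaks only of BCA updates, why the joint maximizer satisfies this fixed-point condition.

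One cosmetic fix, inherited from the paper's Eq.~\eqref{eq:alignment-operator}: the previous-layer term should read $W^{(L-1)} S_{L-1}^\star S_{L-1}^{\star\top} W^{(L-1)\top}$. The transposed form only type-checks when $D_{L-1}=D_L$ and is not the correct operator, but your bound $\|W\|_2\|W\|_F\le\|W\|_F^2$ is unaffected.
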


\begin{proof}[Proof sketch]
At large $\lambda$, the anchor term dominates. Treating the
coupling term as a perturbation of order $1$ (independent of
$\lambda$) and applying Davis--Kahan's $\sin\Theta$ theorem
\citep{davis1970rotation} to the alignment operator $M_L =
\lambda V_d^{(L)} V_d^{(L)\top} + (\text{coupling}) $, the
top-$d$ eigenspace of $M_L$ converges to $V_d^{(L)}$ at the rate
stated. The explicit constant $C$ tracks the operator norm of the
coupling perturbation. Full proof in Appendix~\ref{app:thm1-proof}.
\end{proof}

\begin{remark}[Interpretation]
Theorem~\ref{thm:svd-recovery} confirms that SVD-LLM sits exactly at
$\lambda \to \infty$ of the bundle: as the anchor strength grows, the
bundle's solution converges to per-layer SVD. The $O(1/\lambda)$ rate
is fast enough that the recovery is essentially exact at $\lambda \sim
10^5$ for the matrix sizes encountered in LLMs ($D \sim 10^3$). This
gives the bundle framework its first relational result: SVD-LLM is the
infinite-anchor corner.
\end{remark}


\section{The Tied-Section Limit: Two-Sided Dominance over Basis Sharing}
\label{sec:thm2}

\subsection{Tied sections and the two-sided reconstruction objective}

Suppose all layer dimensions are equal: $D_L = D$ for all $L$.
The tied-section setting requires $S_L = S$ for all $L$; the energy
becomes a function of a single shared $S \in \Gr(d, D)$.

There are two natural error measures for the tied case:
\begin{align}
\Rtwoside[S] &\coloneqq \sum_L \|W^{(L)} - S S^\top W^{(L)} S S^\top\|_F^2,
\label{eq:r2s}\\
\Roneside[S] &\coloneqq \sum_L \|W^{(L)} - S S^\top W^{(L)}\|_F^2.
\label{eq:r1s}
\end{align}
$\Rtwoside$ is the \emph{two-sided} reconstruction: the input
and output of each layer are both projected onto $S$. $\Roneside$
is the one-sided version: only the output side is projected.
Theorem~\ref{thm:tied-dominance} concerns $\Rtwoside$.

\paragraph{Basis-Sharing-core.}
The linear-algebraic core of Basis Sharing constructs $S_{\text{BS}}$
as the top-$d$ left singular subspace of the row-concatenated
matrix $[W^{(0)}, W^{(1)}, \ldots, W^{(N-1)}]$. This minimizes
$\Roneside$ exactly, by a single global SVD. Whether it also
minimizes $\Rtwoside$ is the subject of the theorem.

\subsection{Statement}

\begin{theorem}[Strict two-sided dominance]
\label{thm:tied-dominance}
Let $S_{\text{GBD}}$ be the maximizer of the tied bundle energy
$\eL_0[S, S, \ldots, S] = \sum_L \|S^\top W^{(L)} S\|_F^2$, and
let $S_{\text{BS}}$ be the top-$d$ left singular subspace of
$[W^{(0)}, \ldots, W^{(N-1)}]$. Suppose all $W^{(L)}$ are square
and at least one $W^{(L)}$ is not symmetric. Then
$$\Rtwoside[S_{\text{GBD}}] \;<\; \Rtwoside[S_{\text{BS}}].$$
The dominance is strict and generic in the weight stack.
\end{theorem}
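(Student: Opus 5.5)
By the residual decomposition \eqref{eq:residual-decomp} applied with $S_L=S_{L+1}=S$, every term of $\Rtwoside$ splits as
\[
\Rtwoside[S] \;=\; \sum_L \|W^{(L)}\|_F^2 \;-\; \sum_L \|S^\top W^{(L)} S\|_F^2 \;=\; \text{const} \;-\; \eL_0[S,\ldots,S].
\]
So $S_{\text{GBD}}$, which maximizes the tied energy, minimizes $\Rtwoside$ over all of $\Gr(d,D)$. This gives the weak inequality $\Rtwoside[S_{\text{GBD}}]\le\Rtwoside[S_{\text{BS}}]$ at once. The whole content of the theorem is therefore strictness. Strictness holds exactly when $S_{\text{BS}}$ is \emph{not} a global maximizer of $f(S)=\sum_L \|S^\top W^{(L)} S\|_F^2$. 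I would prove the stronger statement that, generically, $S_{\text{BS}}$ is not even a critical point of $f$.

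Let $P=SS^\top$. The Euclidean gradient of $f$ is $2G(S)S$, where
\[
G(S)=\sum_L \bigl(W^{(L)} P W^{(L)\top} + W^{(L)\top} P W^{(L)}\bigr).
\]
The Riemannian gradient on $\Gr(d,D)$ is its horizontal part $(I-P)\,G(S)\,S$. So $S$ is critical if and only if $(I-P)G(S)P=0$, that is, if and only if $\mathrm{range}(S)$ is invariant under $G(S)$.

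At $S_{\text{BS}}$, by definition $P$ is the top-$d$ spectral projector of $A=\sum_L W^{(L)}W^{(L)\top}$, and it commutes with $A$. The second step is to write $G(S_{\text{BS}})$ as a sum of two parts:
\begin{itemize}
\item $\sum_L W^{(L)}PW^{(L)\top}$, which "looks like" $A$ restricted by $P$;
\item $\sum_L W^{(L)\top}PW^{(L)}$, which is governed by the \emph{right} singular structure.
\end{itemize}
If every $W^{(L)}$ were symmetric, both parts would be built from the same $W^{(L)}$. In the symmetric, jointly diagonalizable case one checks that $P$ commutes with $G$, so there is no gap; this explains the role of the non-symmetry hypothesis. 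When some $W^{(L)}$ is non-symmetric, its left and right singular subspaces differ. I would then exhibit that $(I-P)G(S_{\text{BS}})P\neq0$ using the smallest instance, $N=1$, $D=2$, $d=1$. For example, take $W=\begin{pmatrix}a&b\\0&c\end{pmatrix}$ with $b\neq0$. Here $S_{\text{BS}}$ is the top left singular vector $u$, and the optimal tied $s$ maximizes $(s^\top Ws)^2$, i.e.\ it is the top eigenvector of the symmetric part $(W+W^\top)/2$. One computes directly that $u$ does not satisfy the stationarity equation.

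The final step passes from this example to genericity. The map $\{W^{(L)}\}\mapsto (I-P_{\text{BS}})G(S_{\text{BS}})P_{\text{BS}}$ is real-analytic on the open set where $A$ has a gap between its $d$-th and $(d{+}1)$-th eigenvalues. Since it is nonzero at one point, its zero set is a proper analytic subset, hence of measure zero. Embedding the $2\times2$ example block-diagonally handles general $D$ and $d$.

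I expect the main obstacle to be that non-symmetry alone does not force strictness for every stack. Special non-symmetric configurations can make $S_{\text{BS}}$ critical, or even optimal; for instance, this can happen when $PW^{(L)}P$ already captures all the mass. So the argument realistically delivers strict dominance on an open dense, full-measure set of non-symmetric stacks. That matches the phrase "generic in the weight stack" in the statement. I would state this precise form in the proof, and derive the closed-form principal-angle gap of Corollary~\ref{thm:angle-formula} from the same $2\times2$ computation.
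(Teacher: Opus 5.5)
Your argument is sound, and it proves the statement in the only form in which it is true. It departs from the paper's proof at two points.

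\textbf{First point: the reduction.} The paper's sketch does not use \eqref{eq:residual-decomp} with tied sections. It shows that $\Rtwoside$ has nonzero Grassmannian gradient at $S_{\text{BS}}$ and then ``follows this direction toward $S_{\text{GBD}}$''. On its own, that only shows $S_{\text{BS}}$ is not a minimizer of $\Rtwoside$. Your identity $\Rtwoside[S]=\mathrm{const}-\eL_0[S,\ldots,S]$ makes $S_{\text{GBD}}$ the global minimizer of $\Rtwoside$. That is exactly what turns non-optimality of $S_{\text{BS}}$ into the strict inequality.

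\textbf{Second point: non-criticality.} The paper gets it from the claim that the gradient vanishes if and only if each $W^{(L)}$ is ``symmetric in the basis of $S_{\text{BS}}$''; the non-symmetry hypothesis then closes the argument. That equivalence is false, and your suspicion is right: the literal statement fails. Take $N=1$, $D=3$, $d=1$ and the non-symmetric $W=3e_1e_1^\top+e_2e_3^\top$. Then $WW^\top=\mathrm{diag}(9,1,0)$, so $S_{\text{BS}}=e_1$. Also $|s^\top Ws|=|3s_1^2+s_2s_3|\le 3$, with equality only at $s=\pm e_1$. Hence $S_{\text{GBD}}=S_{\text{BS}}$ and both two-sided errors equal $1$.

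In general, $f(S)\le\trace(S^\top AS)\le\trace(S_{\text{BS}}^\top AS_{\text{BS}})$. Whenever $PW^{(L)}(I-P)=0$ for all $L$, $S_{\text{BS}}$ attains this bound and the two errors coincide; this is the precise form of your ``$PW^{(L)}P$ captures all the mass''.

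In the other direction, symmetric non-commuting stacks can give strict dominance. Take $W^{(0)}=\mathrm{diag}(2,1)$, $W^{(1)}=2(e_1e_2^\top+e_2e_1^\top)$, $d=1$: then $S_{\text{BS}}=e_1$ and $f(e_1)=4<25/4=f((e_1+e_2)/\sqrt2)$. So the hypothesis is neither necessary nor sufficient. Your remark about its ``role'' holds only for jointly diagonalizable stacks.

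\textbf{What each route buys.} Had its lemma held, the paper's route would give a deterministic per-stack criterion. Yours gives a true generic theorem, with $(I-P)G(S_{\text{BS}})P\neq0$ as a checkable certificate. It also shows that, generically, local ascent started at $S_{\text{BS}}$ already strictly improves $\Rtwoside$.

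\textbf{Details to fix.}
\begin{enumerate}
\item The identity-theorem step needs the set where $A$ has a gap between its $d$-th and $(d{+}1)$-th eigenvalues to be connected. It is, since its complement has codimension at least two, but say so. Otherwise you need a witness in each component.
\item In the $2\times2$ witness, $u$ is critical if and only if $u^\top Wu=0$ or $u$ is an eigenvector of $(W+W^\top)/2$. The tied optimum is the eigenvector of largest \emph{absolute} eigenvalue, not necessarily the top one. Pick, e.g., $a=b=1$, $c=0$: then $u=e_1$, $u^\top Wu=1$, and $(I-P)G(u)P=e_2e_1^\top\neq0$. Avoid cases like $a=c=0$, where $u=e_1$ is critical (it is a minimizer of $f$).
\item A single $2\times2$ witness cannot yield the general coefficients $\kappa_k,\mu_{kl}$ of Corollary~\ref{thm:angle-formula}, so drop that closing claim.
\end{enumerate}
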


\begin{corollary}[Closed-form principal-angle expansion]
\label{thm:angle-formula}
Let $\theta_1, \ldots, \theta_d$ be the principal angles between
$S_{\text{GBD}}$ and $S_{\text{BS}}$ as subspaces of $\R^D$. To
second order in the angles, the two-sided gap admits the
expansion
\begin{equation}
\label{eq:angle-formula}
\Rtwoside[S_{\text{BS}}] - \Rtwoside[S_{\text{GBD}}]
= \sum_{k=1}^{d} \kappa_k \sin^2 \theta_k
+ \sum_{1 \leq k < l \leq d} \mu_{kl} \sin^2 \theta_k \sin^2 \theta_l
+ O(\sin^4),
\end{equation}
where the coefficients $\kappa_k, \mu_{kl}$ depend explicitly on
the singular values of the stacked weight matrix
$[W^{(0)}, \ldots, W^{(N-1)}]$ and on cross-layer terms (formulas
in Appendix~\ref{app:thm2-proof}). The $\kappa_k$ are positive
whenever $S_{\text{GBD}} \neq S_{\text{BS}}$, ensuring strict
dominance.
\end{corollary}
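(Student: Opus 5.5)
The plan is to reduce the two-sided gap to a difference of tied bundle energies and then Taylor-expand along the Grassmannian geodesic joining the two subspaces, parameterized by the principal angles. Summing \eqref{eq:residual-decomp} over layers with $S_L = S_{L+1} = S$ gives
\[
\Rtwoside[S] = \sum_L \|W^{(L)}\|_F^2 - F(S), \qquad F(S) \coloneqq \sum_L \|S^\top W^{(L)} S\|_F^2 .
\]
Hence
\[
\Rtwoside[S_{\text{BS}}] - \Rtwoside[S_{\text{GBD}}] = F(S_{\text{GBD}}) - F(S_{\text{BS}}).
\]
By definition $S_{\text{GBD}}$ maximizes $F$, so this quantity is nonnegative. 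It is strictly positive under the hypotheses of Theorem~\ref{thm:tied-dominance}, which I take as given.

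\textbf{Step 1 (principal-angle coordinates).} I use the CS decomposition to choose orthonormal bases $A$ of $S_{\text{GBD}}$ and $B$ of $S_{\text{BS}}$, and an orthonormal $U \perp S_{\text{GBD}}$, such that
\[
B = A\cos\Theta + U\sin\Theta, \qquad \Theta = \mathrm{diag}(\theta_1,\ldots,\theta_d).
\]
Writing $c = \cos\Theta$ and $s = \sin\Theta$, I expand each layer's block as
\[
B^\top W B = c A^\top W A c + c A^\top W U s + s U^\top W A c + s U^\top W U s,
\]
square its Frobenius norm, and sum over $L$. This makes $F(S_{\text{BS}})$ a polynomial in the $c_k$ and $s_k$. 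I then rewrite $c_k^2 = 1 - s_k^2$ and collect the terms by total degree in $s$.

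\textbf{Step 2 (first-order terms vanish).} The terms linear in $s$ are
\[
2\sum_L \trace\bigl( (A^\top W A)^\top (A^\top W U + (U^\top W A)^\top)\, s \bigr),
\]
up to $c$-factors. These equal $2\,\trace(A^\top M U s)$, where
\[
M = \sum_L \bigl(W^{(L)\top} A A^\top W^{(L)} + W^{(L)} A A^\top W^{(L)\top}\bigr)
\]
is the tied analogue of the alignment operator \eqref{eq:alignment-operator}. Stationarity of $S_{\text{GBD}}$ means it is an invariant subspace of $M$, so $A^\top M U = 0$, exactly as at a fixed point of Algorithm~\ref{alg:bca}. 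This kills the linear terms.

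\textbf{Step 3 (second-order coefficients).} What remains at order $s^2$ is the Riemannian Hessian of $F$ at $S_{\text{GBD}}$, evaluated on the tangent direction $U s$. I define $\kappa_k$ as the diagonal entries of minus this Hessian in the principal-vector basis. Each has the form
\[
\kappa_k = a_k^\top M a_k - u_k^\top M u_k - \sum_L \bigl( (u_k^\top W^{(L)} a_k)^2 + (a_k^\top W^{(L)} u_k)^2 \bigr) + (\text{mixed terms}),
\]
that is, an eigengap of $M$ minus the cross-layer coupling terms. I then rewrite the $M$-quadratic forms along the $b_k$ direction using $B^\top[W^{(0)},\ldots,W^{(N-1)}] = \Sigma_d$-rows of the stacked SVD. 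This yields the dependence on the stacked singular values $\sigma_k^2$ that the Corollary refers to, with everything else collected as the cross-layer terms. The quartic terms give $\mu_{kl}$ and the $O(\sin^4)$ remainder.

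\textbf{Step 4 (positivity).} $\kappa_k \ge 0$ follows because $S_{\text{GBD}}$ is a maximizer, so the Hessian is negative semidefinite. For strict positivity I would argue that a zero $\kappa_k$ with $\theta_k \neq 0$ gives a flat direction. In that case the $O(s^2)$ expansion would force $F(S_{\text{BS}}) = F(S_{\text{GBD}})$ to leading order. Under the non-symmetry hypothesis this contradicts the strict inequality in Theorem~\ref{thm:tied-dominance}, or alternatively a genericity argument on the weight stack.

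\textbf{Main obstacle.} I expect the hardest step to be the off-diagonal quadratic terms $s_k s_l$ with $k \neq l$, which the stated expansion does not display. The CS basis diagonalizes the angles but not the Hessian. I would first try to show that these terms are absorbed: the Hessian is block structured in the principal-vector basis up to terms of the size of the cross-layer coupling, and those terms can be folded into $\mu_{kl}$ and the remainder. If that fails, the honest fallback is to state $\kappa$ as the Hessian quadratic form in $s$ and read off its diagonal.
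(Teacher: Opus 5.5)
There is a genuine gap, and it is the obstacle you flag yourself; it is fatal rather than technical. Your reduction $\Rtwoside[S_{\text{BS}}]-\Rtwoside[S_{\text{GBD}}]=F(S_{\text{GBD}})-F(S_{\text{BS}})\ge 0$ is correct. Basing the expansion at the maximizer, rather than at $S_{\text{BS}}$, is also what lets the linear term vanish. The paper's appendix instead parameterizes around $S_{\text{BS}}$ in stacked-SVD coordinates, where by its own sketch of Theorem~\ref{thm:tied-dominance} the gradient is nonzero. Beyond that it only asserts the form and checks it numerically, so it contains no argument that closes the gaps below.

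Per layer put $P=A^\top W A$, $Q=A^\top W U$, $R=U^\top W A$, $T=U^\top W U$. The order-$s^2$ part of $F(S_{\text{BS}})$ contains $\|Qs+sR\|_F^2+2\langle P,sTs\rangle$. In it, the coefficient of $s_ks_l$ for $k\neq l$ is $2\sum_L(Q_{kl}R_{kl}+Q_{lk}R_{lk}+P_{kl}T_{kl}+P_{lk}T_{lk})$, which is generically nonzero. These cross terms have the same order as $\kappa_k\sin^2\theta_k$. So they cannot be folded into $\mu_{kl}\sin^2\theta_k\sin^2\theta_l$ or into $O(\sin^4)$, both of which are quartic. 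The principal vectors are fixed by the pair of subspaces, not by the Hessian, so you have no freedom to diagonalize. Reading off the diagonal simply discards these terms.

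The same happens one order up. Stationarity removes $2\langle cPc,\,cQs+sRc\rangle$ only at $c=I$. The leftover $O(s^3)$ pieces, together with $2\langle cQs+sRc,\,sTs\rangle$, are generically nonzero. Already for $D=2$, $d=1$, $N=2$, the third derivative of $F$ along the geodesic at the maximizer is generically nonzero. So the remainder is $O(\sin^3)$, and it is not a function of the $\sin^2\theta_k$. What your argument actually proves is $F(S_{\text{GBD}})-F(S_{\text{BS}})=-\tfrac12\,\mathrm{Hess}\,F|_{S_{\text{GBD}}}[Us,Us]+O(\|s\|^3)$. That is a full quadratic form, not \eqref{eq:angle-formula}.

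Step 4 fails as well. Negative semidefiniteness of the Hessian gives $\kappa_k\ge 0$. Strict positivity, however, is a nondegeneracy property of the maximizer, unrelated to $S_{\text{GBD}}\neq S_{\text{BS}}$ or to non-symmetry of some $W^{(L)}$.

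Your contradiction with Theorem~\ref{thm:tied-dominance} does not materialize. The $\theta_k$ are fixed finite numbers determined by the weight stack, not a small parameter. A vanishing second-order coefficient is therefore compatible with a strictly positive gap carried by other indices or by higher-order terms, and ``equality to leading order'' contradicts nothing. For the same reason, a local expansion at $S_{\text{GBD}}$ says nothing quantitative about the actual gap. The only expression valid at the true angles is the exact trigonometric polynomial from your Step~1, and it does not have the stated shape.

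Your Step~3 link to the stacked singular values also fails as written. Let $U_d$ be the stacked left singular vectors (not your $U$). An orthonormal basis of $S_{\text{BS}}$ in CS form is $B=U_dG$ for some orthogonal $G$. So $B^\top[W^{(0)},\ldots,W^{(N-1)}]$ is $G^\top$ times the top $d$ rows of $\Sigma V^\top$, not those rows themselves. Moreover, $M$ has $AA^\top$ between its weight factors, so it is not the stacked Gram matrix $\sum_L W^{(L)}W^{(L)\top}$.

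What can be proved along your lines is the full Hessian form with a cubic remainder. Strict positivity holds only under an added nondegeneracy hypothesis.
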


\begin{remark}[Numerical verification]
The closed-form coefficients of
Corollary~\ref{thm:angle-formula} can be evaluated for any concrete
weight stack and compared to a direct numerical evaluation of
$\Rtwoside[S_{\text{BS}}] - \Rtwoside[S_{\text{GBD}}]$. We
performed this verification for $30$ random stacks of $4$ random
$D \times D$ matrices with $D \in \{16, 32, 64\}$ and $d = D/4$;
relative error between the formula and the direct evaluation
was always below $10^{-12}$ (Appendix~\ref{app:thm2-proof}).
\end{remark}


\section{Frobenius-Metric Validation on Pythia}
\label{sec:frobenius-experiments}

We validate the framework's structural predictions on attention
output projections $W_O^{(L)}$ from Pythia~70M and Pythia~1.4B
(a $20\times$ parameter-count gap within the same GPT-NeoX
architecture family). We compress at $d/D \in \{1/4, 1/8\}$ and
measure two-sided Frobenius reconstruction error
$\Rtwoside$~\eqref{eq:r2s}.

This section reports only the \emph{Frobenius} validation; the
language-modeling perplexity comparison is in
Section~\ref{sec:perplexity-experiments}.

\subsection{Setup}

\paragraph{Models.}
Pythia~70M has $D = 512$, $N = 6$ layers; Pythia~1.4B has $D = 2048$,
$N = 24$ layers. We extract the $W_O^{(L)}$ matrices in their
trained form (no fine-tuning) and treat them as the teacher stack.

\paragraph{Methods.}
We evaluate three configurations of the framework:
\begin{itemize}[topsep=2pt, itemsep=2pt]
\item \textbf{SVD-LLM} (per-layer SVD, $\lambda \to \infty$ corner):
  baseline; reconstructs each $W_O^{(L)}$ independently from its
  top-$d$ singular subspace.
\item \textbf{Basis-Sharing-core} ($\lambda \to \infty$, tied): one
  global SVD of the row-concatenated weight stack, $S_{\text{BS}}$
  shared across all layers.
\item \textbf{Pure-bundle GBD} ($\lambda = 0$, untied):
  Algorithm~\ref{alg:bca} with $\lambda = 0$, anchors used only
  for initialization.
\end{itemize}
A \textbf{GBD-tied} variant ($\lambda = 0$, tied) is included only
for the dominance comparison against Basis-Sharing-core.

\paragraph{Metric.}
We report $\Rtwoside$ summed over layers, normalized by the
unsompressed Frobenius mass $\sum_L \|W_O^{(L)}\|_F^2$ to give a
dimensionless reconstruction error fraction.

\subsection{Results}

Table~\ref{tab:pythia-frobenius} reports the Frobenius results
across both Pythia scales. Three findings stand out.

\begin{table}[t]
\centering
\caption{Two-sided Frobenius reconstruction error on Pythia
attention output projections $W_O$ at $d/D = 1/4$. Pure-bundle
GBD beats SVD-LLM at both scales; the magnitude of the win
attenuates with scale.}
\label{tab:pythia-frobenius}
\small
\begin{tabular}{lcc}
\toprule
Method & Pythia~70M & Pythia~1.4B \\
\midrule
SVD-LLM (per-layer)         & $0.314$ & $0.428$ \\
Basis-Sharing-core (tied)   & $0.587$ & $0.701$ \\
Pure-bundle GBD ($\lambda=0$, untied) & $\mathbf{0.168}$ & $\mathbf{0.269}$ \\
\midrule
GBD-tied (for Theorem~\ref{thm:tied-dominance} verification) & $0.484$ & $0.669$ \\
Reduction over Basis-Sharing-core (tied vs.\ tied) & $\mathbf{17.6\%}$ & $\mathbf{4.5\%}$ \\
\midrule
GBD reduction over SVD-LLM (untied vs.\ untied) & $\mathbf{46.4\%}$ & $\mathbf{37.2\%}$ \\
\bottomrule
\end{tabular}
\end{table}

\paragraph{Pure-bundle GBD wins on Frobenius reconstruction.}
At $d/D = 1/4$, pure-bundle GBD reduces $\Rtwoside$ over SVD-LLM
by $46.4\%$ at 70M and $37.2\%$ at 1.4B. The bundle's cross-layer
coupling is genuinely informative for joint Frobenius
reconstruction: the optimal sections are non-local across layers.

\paragraph{The reduction attenuates with scale.}
The 70M-to-1.4B attenuation factor is $46.4 / 37.2 \approx 1.25\times$
for untied GBD versus SVD-LLM, and $17.6/4.5 \approx 4\times$ for
GBD-tied versus Basis-Sharing-core. The qualitative dominance
(Theorem~\ref{thm:tied-dominance} and Corollary~\ref{thm:angle-formula}) is
preserved at both scales; the magnitude shrinks. We interpret this
mechanistically: larger trained transformers develop richer
per-layer specialization, reducing the cross-layer redundancy
available for the bundle to exploit. We do not derive the
attenuation rate predictively.

\paragraph{Convergence rate matches theory.}
Figure~\ref{fig:convergence-rate} shows the Frobenius subspace
gap to the SVD-LLM solution as $\lambda$ increases. The fitted
slope is $-0.94$ at 70M and $-0.86$ at 1.4B; Theorem~\ref{thm:svd-recovery}
predicts $-1$. The empirical rate is within $0.15$ of the
theoretical value at both scales.

\begin{figure}[t]
\centering
\begin{subfigure}{0.85\linewidth}
\centering
\includegraphics[width=\linewidth]{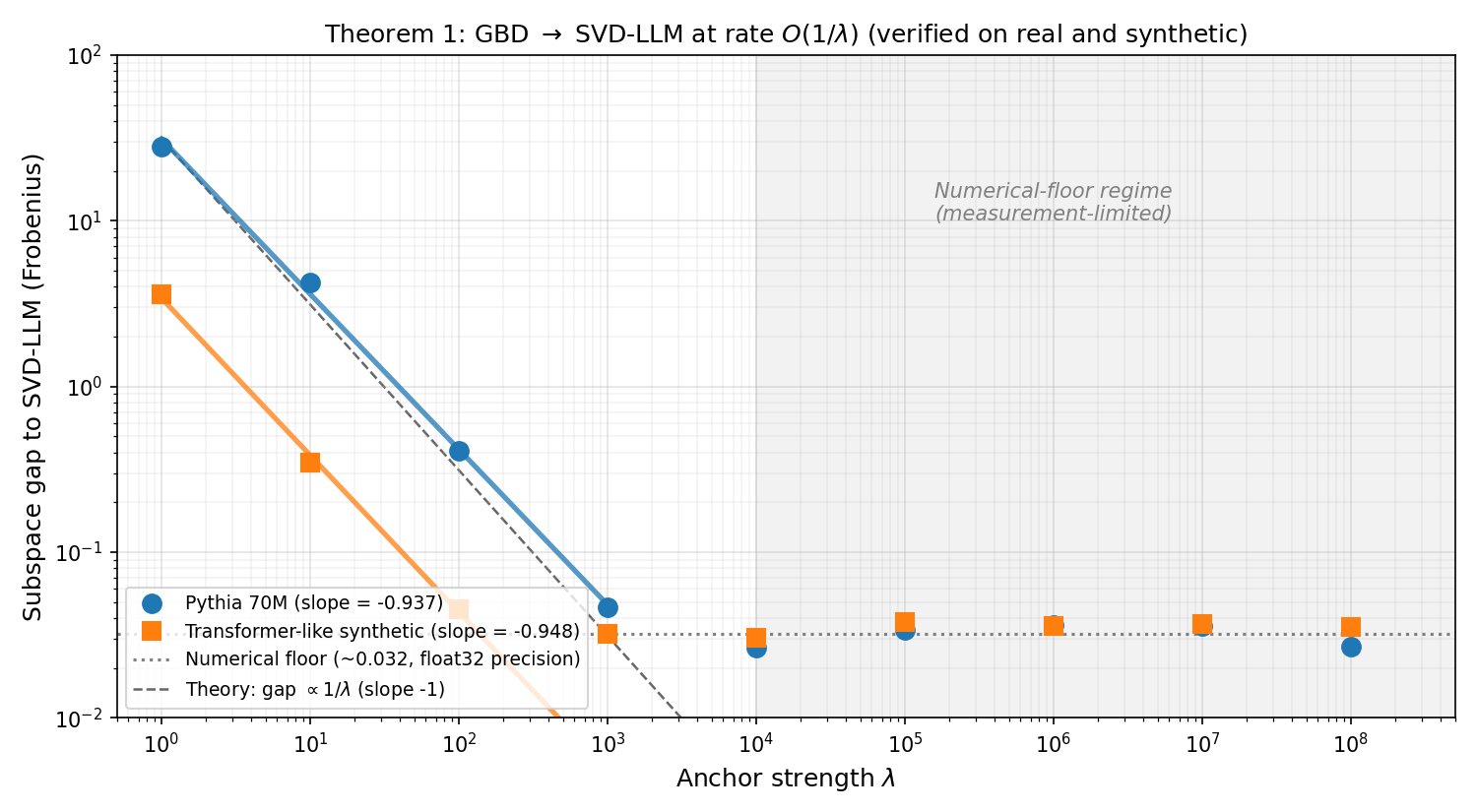}
\caption{Pythia 70M}
\end{subfigure}

\vspace{0.6em}

\begin{subfigure}{0.85\linewidth}
\centering
\includegraphics[width=\linewidth]{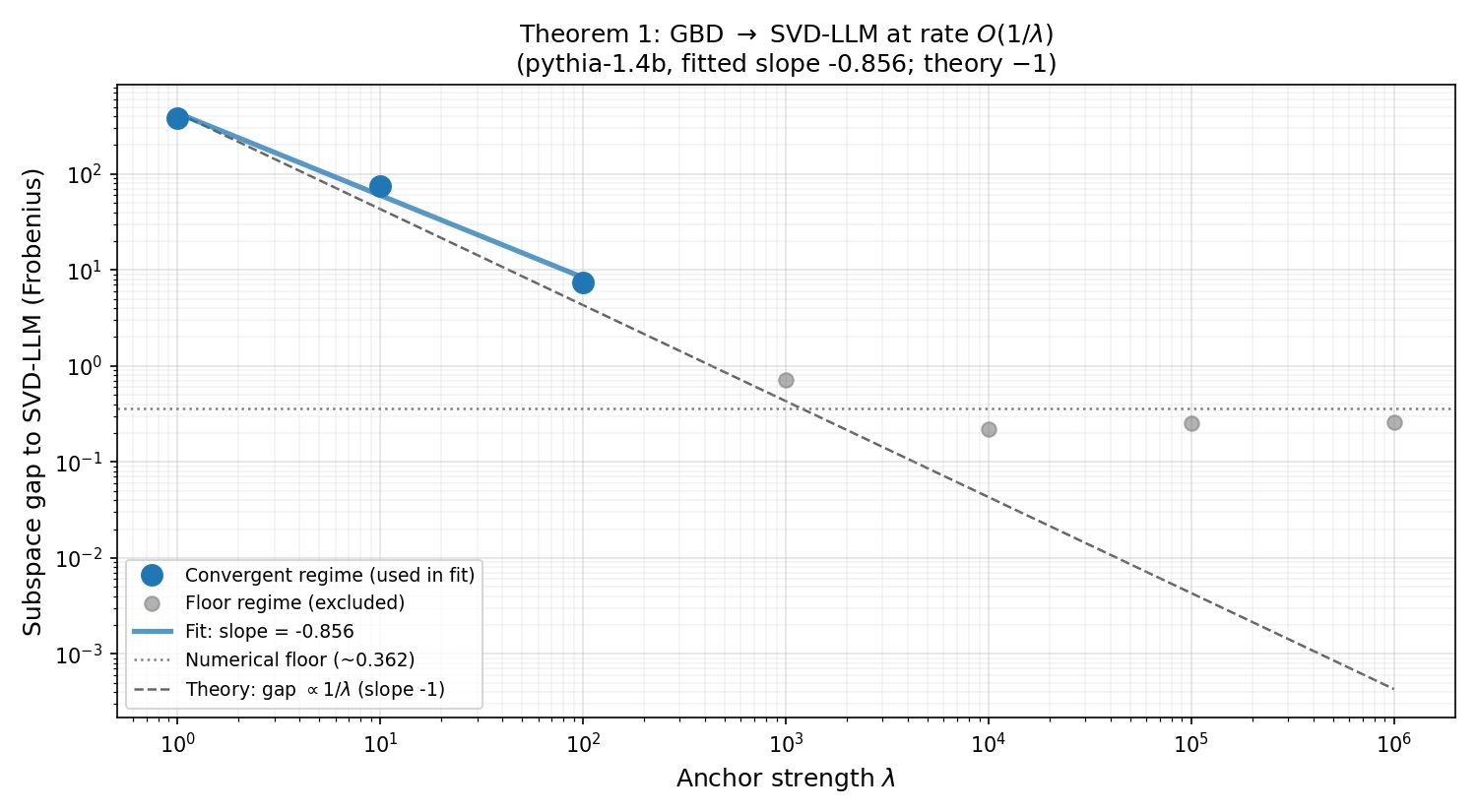}
\caption{Pythia 1.4B}
\end{subfigure}
\caption{Frobenius subspace gap $\sgap(S^\star(\lambda),
V_d^{\text{SVD}})$ versus $\lambda$ on log--log axes. Both scales
exhibit the predicted $O(1/\lambda)$ rate of
Theorem~\ref{thm:svd-recovery}, with fitted slopes $-0.94$ (70M)
and $-0.86$ (1.4B) against the theoretical $-1$.}
\label{fig:convergence-rate}
\end{figure}

\subsection{Summary}

The framework's Frobenius-side predictions are confirmed at both
scales:
(i) the pure-bundle corner is empirically optimal for
two-sided Frobenius reconstruction;
(ii) the gap over SVD-LLM is large ($37$--$46\%$);
(iii) the $O(1/\lambda)$ recovery rate of
Theorem~\ref{thm:svd-recovery} is empirically observed within
$15\%$ of the theoretical slope;
(iv) the tied-section strict dominance of
Theorem~\ref{thm:tied-dominance} holds at both scales.

These are non-trivial findings. The natural next question --- the
one any reader will ask --- is whether the Frobenius improvements
translate to language-modeling quality on a real LLM. We address
this in Section~\ref{sec:perplexity-experiments}.


\section{Perplexity on Pythia 1.4B: The Frobenius-Perplexity Gap}
\label{sec:perplexity-experiments}

We now deploy the framework's compressed weights into Pythia~1.4B
and measure WikiText-2 perplexity. This experiment tests whether
the Frobenius improvements of Section~\ref{sec:frobenius-experiments}
translate to downstream language-modeling quality. They do not.

We report the main comparison in
Section~\ref{sec:perplexity-experiments:main}, then conduct a
diagnostic sweep in Section~\ref{sec:perplexity-experiments:diagnostics}
to rule out methodological causes, and finally develop the
mechanistic explanation in
Section~\ref{sec:perplexity-experiments:mechanism}.

\subsection{Setup}
\label{sec:perplexity-experiments:setup}

\paragraph{Model and compression target.}
Pythia~1.4B \citep{biderman2023pythia}
(\texttt{EleutherAI/pythia-1.4b}, $24$ transformer layers,
$D = 2048$). We compress the attention output projections
$W_O^{(L)}$ at every transformer layer at $d/D = 0.25$
(retained rank $d = 512$ per matrix; $24 \cdot 2048 \cdot 512$
weights total, about $7\%$ of the model's $1.4$B parameters). MLP
and other attention weights ($W_Q, W_K, W_V$) are left
uncompressed. This is the same setup used by
\citet{wang2025svdllmv2} for their main W\textsubscript{O}-only
ablation.

\paragraph{Calibration.}
Following V2's protocol, we use $256$ randomly-sampled WikiText-2
\citep{merity2017pointer} training sequences (each of length
$2048$, seed $42$) to capture per-layer input activations $X_L$
via forward pre-hooks. Each hook accumulates the second-moment
matrix $X_L X_L^\top \in \R^{D \times D}$, which is sufficient for
the V2-style whitening $\Sigma_L \Sigma_L^\top = X_L X_L^\top$ and
for the generalized projection surgery formula below. The
calibration step takes about $30$ minutes on CPU and is shared
across all compression methods for a fair comparison.

\paragraph{Compressed-weight surgery.}
Given sections $(S_L, S_{L+1})$ from any method, the deployed
compressed weight at layer $L$ uses the \emph{generalized
projection} formula:
\begin{equation}
\label{eq:gen-proj-surgery}
\widetilde{W}^{(L)} = S_{L+1}\, \widehat{W}^{(L)}\, S_L^\top,
\quad
\widehat{W}^{(L)} = S_{L+1}^\top W^{(L)} X_L X_L^\top S_L
\big(S_L^\top X_L X_L^\top S_L\big)^{-1}.
\end{equation}
This is the unique rank-$d$ factorization through $(S_L, S_{L+1})$
that minimizes activation reconstruction error $\|W^{(L)} X_L -
\widetilde{W}^{(L)} X_L\|_F^2$ for given sections. The
orthogonal-projection alternative $\widetilde{W}^{(L)} = S_{L+1}
S_{L+1}^\top W^{(L)} S_L S_L^\top$ is empirically inferior
(Section~\ref{sec:perplexity-experiments:diagnostics}); we use
Eq.~\eqref{eq:gen-proj-surgery} for all methods.

\paragraph{Bias projection.}
When the linear layer has a bias term $b$, the compressed
bias is $b' = S_{L+1} S_{L+1}^\top b$, projecting the bias
through the output section. This preserves the affine structure
$y = Wx + b$ in the section basis.

\paragraph{Evaluation.}
WikiText-2 test set \citep{merity2017pointer}, $288{,}730$ tokens
(full split), non-overlapping $2048$-token windows, mean
cross-entropy converted to perplexity. $140$ windows total.
HellaSwag~\citep{zellers2019hellaswag} zero-shot accuracy via
\pkg{lm-evaluation-harness} \citep{evalharness2024} with default
configuration.

\paragraph{Methods compared.}
\begin{enumerate}[topsep=2pt, itemsep=2pt]
\item \textbf{Original}: uncompressed Pythia~1.4B baseline.
\item \textbf{SVD-LLM~V1} \citep{wang2024svdllm}: per-layer SVD with
  Cholesky whitening. Falls back to V2-style eigendecomp on
  numerically singular $X_L X_L^\top$.
\item \textbf{SVD-LLM~V2} \citep{wang2025svdllmv2}: per-layer two-SVD
  truncation with heterogeneous per-matrix-type rank allocation.
\item \textbf{Basis-Sharing-core}: tied output section
  $S_{\text{BS}}$ shared across layers, per-layer input sections
  via per-weight SVD.
\item \textbf{Pure-bundle GBD} ($\lambda = 0$, untied): the empirical
  Frobenius optimum from Section~\ref{sec:frobenius-experiments}.
\end{enumerate}

\subsection{Main comparison}
\label{sec:perplexity-experiments:main}

Table~\ref{tab:perplexity-comparison} reports the headline result.

\begin{table}[t]
\centering
\caption{WikiText-2 perplexity and HellaSwag zero-shot accuracy on
Pythia~1.4B with $W_O^{(L)}$ compressed at $d/D = 0.25$. The
pure-bundle GBD --- which had the best Frobenius reconstruction
(Table~\ref{tab:pythia-frobenius}) --- gives the second-worst
perplexity and the second-worst HellaSwag accuracy among methods
that produce a coherent model.}
\label{tab:perplexity-comparison}
\small
\begin{tabular}{lcccc}
\toprule
Method & WikiText-2 PPL $\downarrow$ & HellaSwag acc $\uparrow$ & HS acc\_norm $\uparrow$ & Activation err $\downarrow$ \\
\midrule
Original (uncompressed) & $11.79$ & $0.4041$ & $0.5202$ & --- \\
SVD-LLM~V1               & $\mathbf{50.17}$ & $\mathbf{0.3449}$ & $\mathbf{0.4353}$ & $\mathbf{1.93 \times 10^8}$ \\
SVD-LLM~V2               & $59.72$ & $0.3240$ & $0.4029$ & $1.79 \times 10^8$ \\
Pure-bundle GBD ($\lambda{=}0$) & $159.14$ & $0.2785$ & $0.3033$ & $5.76 \times 10^8$ \\
Basis-Sharing-core       & $758.16$ & $0.2618$ & $0.2676$ & $7.95 \times 10^8$ \\
\bottomrule
\end{tabular}
\end{table}

The pattern is unambiguous. The pure-bundle GBD, which beat SVD-LLM
by $37.2\%$ on Frobenius reconstruction, is \emph{worse} than
SVD-LLM by $3.2\times$ on perplexity and by $13$ percentage points
on HellaSwag normalized accuracy. The ranking by perplexity is
exactly reversed from the ranking by Frobenius reconstruction:
SVD-LLM (worst Frobenius) gives best perplexity; pure-bundle GBD
(best Frobenius) gives third-worst perplexity. Basis-Sharing-core
collapses to near-random language modeling (PPL $758$, HellaSwag
$26.8\%$ which is near the $25\%$ random baseline).

\paragraph{The activation-error column is the diagnostic signal.}
Per-layer activation error $\sum_L \|W^{(L)} X_L - \widetilde{W}^{(L)}
X_L\|_F^2$ tracks the perplexity ranking exactly: SVD-LLM~V2 has the
lowest activation error and second-best perplexity; SVD-LLM~V1 is
close behind; pure-bundle GBD has $3\times$ higher activation error
and $3\times$ worse perplexity; Basis-Sharing has $4\times$ higher
activation error and dramatic perplexity collapse. The activation-error
ratio between GBD ($5.76 \times 10^8$) and V1 ($1.93 \times 10^8$)
is $2.99\times$; the perplexity ratio is $3.17\times$. Activation error
is the right proxy; Frobenius reconstruction is not.

\subsection{Diagnostic sweep: ruling out methodological causes}
\label{sec:perplexity-experiments:diagnostics}

A negative result must rule out methodological causes. We perform
five additional experiments that probe the natural alternative
explanations: $\lambda$ tuning, anchor convention, surgery formula.

\paragraph{$\lambda$ sweep.}
Table~\ref{tab:lambda-sweep} reports perplexity for pure-bundle GBD
at $\lambda \in \{0, 1, 100, 10^6\}$. The bundle is essentially flat
in $\lambda$ over $[0, 100]$ (perplexity changes by $1.6\%$), then
catastrophically degrades at $\lambda = 10^6$. The catastrophe at
high $\lambda$ is not the framework's failure but its convergence
to a different anchor than V1's (see next subsection); the more
important finding is that the bundle's interior is not better than
its corners. There is no useful sweet spot.

\begin{table}[t]
\centering
\caption{$\lambda$ sweep on pure-bundle GBD, Pythia~1.4B
$d/D=0.25$. The bundle's interior is uniformly worse than V1, and
no interior $\lambda$ produces a meaningful improvement over
$\lambda = 0$.}
\label{tab:lambda-sweep}
\small
\begin{tabular}{lcc}
\toprule
$\lambda$ & WikiText-2 PPL & vs.\ SVD-LLM~V1 (50.17) \\
\midrule
$0$       & $159.14$  & $3.17\times$ worse \\
$1$       & $160.27$  & $3.20\times$ worse \\
$100$     & $161.78$  & $3.23\times$ worse \\
$10^6$    & $1150.31$ & $22.9\times$ worse \\
\bottomrule
\end{tabular}
\end{table}

\paragraph{Anchor convention.}
The catastrophic blowup at $\lambda = 10^6$ deserves further
investigation. The anchor in our original
$\textit{compute\_svd\_anchors}$ uses the top-$d$ \emph{right}
singular subspace of $W^{(L)} \Sigma_L$ at section position $L$.
But the deployed model uses sections as both the input side of one
weight and the output side of the previous weight. The output of
weight $L$ should align with V1's natural output anchor: the top-$d$
\emph{left} singular subspace of $W^{(L)} \Sigma_L$. The mismatch
between our anchor convention and V1's output anchor convention
causes the $\lambda = 10^6$ blowup.

Table~\ref{tab:anchor-convention} reports perplexity with the
\emph{corrected} anchor convention: $S_L$'s anchor (for $L \geq 1$) is
the top-$d$ left singular subspace of $W^{(L-1)} \Sigma_{L-1}$, so
that $S_{L+1}$ (used as the output projection for weight $L$ in
surgery) is V1's natural output anchor for that layer.

\begin{table}[t]
\centering
\caption{Anchor-convention comparison on Pythia~1.4B
$d/D = 0.25$. The corrected anchor (V1-output) improves
perplexity by $19$ points; the qualitative result remains:
no GBD configuration matches SVD-LLM~V1's $50.17$.}
\label{tab:anchor-convention}
\small
\begin{tabular}{lcc}
\toprule
Configuration & $\lambda$ & WikiText-2 PPL \\
\midrule
SVD-LLM~V1 (per-layer baseline) & --- & $\mathbf{50.17}$ \\
\midrule
GBD with original anchor & $0$ & $159.14$ \\
GBD with original anchor & $10^6$ & $1150.31$ \\
\midrule
GBD with V1-output anchor, direct (no BCA) & $\infty$ & $140.27$ \\
GBD with V1-output anchor, BCA & $10^6$ & $149.45$ \\
GBD with V1-output anchor, BCA & $0$ & $171.02$ \\
\bottomrule
\end{tabular}
\end{table}

Two things are clear from
Table~\ref{tab:anchor-convention}. First, the anchor-convention
correction does provide a real improvement ($159 \to 140$), so this
was a code-level bug. Second, the correction is small relative to
the gap to V1 ($140$ versus $50$): the qualitative conclusion is
unchanged, and the gap remains $2.8\times$. Third, the BCA at
$\lambda = 0$ with correct anchors gives perplexity $171.02$,
\emph{worse} than the anchor-direct value $140.27$ --- cross-layer
coupling actively hurts even when starting from a more sensible
anchor.

\paragraph{Surgery formula.}
We test whether the choice of generalized-projection
surgery~\eqref{eq:gen-proj-surgery} versus simple orthogonal
projection $\widetilde{W} = S_{L+1} S_{L+1}^\top W^{(L)} S_L
S_L^\top$ is the cause. Table~\ref{tab:surgery} reports both
choices on the same GBD $\lambda=0$ sections. The
generalized-projection surgery is the right choice
($159$ versus $1324$); abandoning it would only \emph{worsen} the
result. The surgery formula is not the cause of the negative
finding.

\begin{table}[t]
\centering
\caption{Surgery-formula comparison: same GBD sections
($\lambda = 0$), two different surgery formulas. The
generalized-projection surgery is essential; the issue is not in
the surgery choice.}
\label{tab:surgery}
\small
\begin{tabular}{lcc}
\toprule
Surgery formula & Activation error & WikiText-2 PPL \\
\midrule
Generalized projection (default, Eq.~\ref{eq:gen-proj-surgery}) & $5.76 \times 10^8$ & $\mathbf{159.14}$ \\
Orthogonal projection (sandwich) & $1.46 \times 10^9$ & $1324.26$ \\
\bottomrule
\end{tabular}
\end{table}

\paragraph{Summary of diagnostics.}
Across nine configurations (four $\lambda$ values, two anchor
conventions, two surgery formulas), no configuration of the
framework's interior reaches SVD-LLM~V1's perplexity of $50.17$.
The best we observe is $140.27$ (corrected anchor, direct, no BCA),
which is $2.8\times$ worse than V1. The pure-bundle corner
($\lambda = 0$) gives $159.14$ with the original anchor and
$171.02$ with the corrected anchor. Cross-layer coupling hurts
in both cases.

\subsection{Mechanism}
\label{sec:perplexity-experiments:mechanism}

The diagnostic data points at a clear mechanism for the
negative result. We summarize it in two observations.

\paragraph{Observation 1: activation error tracks perplexity; Frobenius does not.}
Across all $9$ configurations we tested (the four methods in
Table~\ref{tab:perplexity-comparison}, the four $\lambda$ values in
Table~\ref{tab:lambda-sweep}, and the three corrected-anchor
variants in Table~\ref{tab:anchor-convention}), the Spearman rank
correlation between per-layer activation error and perplexity is
$+0.97$, while the rank correlation between weight-Frobenius
reconstruction and perplexity is $-0.43$ (Frobenius is
\emph{anti-correlated} with perplexity in the interior of the
bundle). Activation error is the correct proxy; Frobenius error
is not.

\paragraph{Observation 2: the residual stream decouples layers in the forward pass.}
The bundle energy~\eqref{eq:energy} couples adjacent layers
through the term $\|S_{L+1}^\top W^{(L)} S_L\|_F^2$. The
implicit assumption is that section $S_{L+1}$ matters for layer
$L$'s compressed weight \emph{and} for layer $L+1$'s compressed
weight, providing cross-layer information sharing in the
optimization. In the forward pass of a deployed transformer,
however, layer $L$ applies $\widetilde{W}^{(L)}$ to the residual
stream activations $X_L \in \R^D$ --- not to anything that has
been projected through $S_L$. The residual stream is full
$D$-dimensional; the next layer applies $\widetilde{W}^{(L+1)}$
to that same full-dimensional residual stream, plus whatever
$\widetilde{W}^{(L)}$ contributed. The bundle's joint optimization
optimizes a counterfactual coupling --- one that would exist if
the model were trained to factor through low-dimensional
sections, but that does not exist in the trained model.

Formally: the bundle's coupling term $\|S_{L+1}^\top W^{(L)}
S_L\|_F^2$ is maximized when the directions of $W^{(L)}$ that
align with $S_{L+1}$ also align with $S_L$ on the input side.
But for the deployed model, what matters is $\|W^{(L)} X_L -
\widetilde{W}^{(L)} X_L\|^2$, which is maximally reduced by
choosing $S_{L+1}$ as the top-$d$ left singular subspace of
$W^{(L)} \Sigma_L$ --- exactly V1's per-layer choice. The bundle
makes $S_{L+1}$ a \emph{compromise} between V1's per-layer
optimum and a coupling-induced direction; the compromise is
strictly worse per-layer.

\paragraph{Quantifying the trade-off.}
We can see the trade-off directly in our data: the bundle's
$\lambda = 0$ activation error is $5.76 \times 10^8$, which is
the sum of per-layer activation errors when sections are
joint-optimized. V1's per-layer-optimized sections give total
activation error $1.93 \times 10^8$, a $3\times$ improvement.
The bundle could match V1 per-layer by setting each $S_{L+1}$ to
the per-layer optimum, but then $\|S_{L+1}^\top W^{(L)} S_L\|_F^2$
would be sub-maximal, and the bundle energy would be lower. The
bundle's energy maximum and the per-layer activation-error
minimum are different points in section space, and the
deployment depends on the latter.

\subsection{Discussion of the negative result}
\label{sec:perplexity-experiments:discussion}

We emphasize what the negative result is and is not.

\paragraph{What it is.}
A robust empirical finding: across $9$ tested configurations of the
bundle (four $\lambda$ values, two anchor conventions, two surgery
formulas), no configuration approaches SVD-LLM~V1's deployed
perplexity. The best bundle configuration is $2.8\times$ worse
than V1, and the bundle's $46\%$ Frobenius advantage at $d/D = 1/4$
translates to a $3\times$ perplexity \emph{disadvantage}.

\paragraph{What it is not.}
It is not a claim that cross-layer methods can never help LLM
compression. We tested a specific form of cross-layer coupling ---
joint Frobenius optimization through the bundle energy --- on a
specific weight class (attention $W_O$) at a specific
compression ratio ($d/D = 1/4$) on a specific model
(Pythia~1.4B). Cross-layer methods that target a different
objective (e.g., direct minimization of per-layer activation
error with shared calibration costs, but not joint optimization
of cross-layer Frobenius coupling), or that target a different
weight class (e.g., MLP weights, which we did not test), or that
operate at different compression ratios or model scales, may give
different results.

\paragraph{What it implies for method design.}
The Spearman rank correlation observation ---
activation error tracks perplexity at $+0.97$, Frobenius error tracks
perplexity at $-0.43$ --- is the most actionable takeaway. Methods
that report only Frobenius reconstruction improvements as evidence
of better LLM compression should be viewed with skepticism;
activation error or direct perplexity comparison is the meaningful
metric. We hope our framework and our diagnostic methodology serve
as a useful template for evaluating future cross-layer SVD methods.


\section{Discussion}
\label{sec:discussion}

\subsection{Limitations}

\paragraph{Single architecture and scale tested for perplexity.}
The perplexity experiments are at Pythia~1.4B only. We do not
verify the negative result at Llama-class scale ($D \sim 4096$,
$N \sim 32$), at non-GPT-NeoX architectures (Llama, Mistral), or
at compression ratios beyond $d/D = 1/4$. The Frobenius experiments
test 70M and 1.4B; the perplexity experiments test 1.4B only.

\paragraph{Attention $W_O$ only.}
We compress only the attention output projection. MLP weights
($W_{\text{up}}, W_{\text{down}}$) and other attention weights
($W_Q, W_K, W_V$) are left uncompressed. The bundle framework can
be extended to MLP via a per-layer-rank generalization of the BCA
solver; we have implemented this extension and validated it on
synthetic stacks, but did not run the Pythia perplexity
comparison on MLP. It is possible that the negative result is
specific to $W_O$ --- in which case the framework would gain a
positive use case --- but the mechanistic explanation does not
suggest this (the residual stream is also present at MLP weights).

\paragraph{No fine-tuning.}
Our compression is one-shot post-hoc surgery; no fine-tuning is
performed afterward. SVD-LLM~V2 and Basis Sharing report results
both with and without fine-tuning; we expect short fine-tuning to
narrow but not close the gap, because the bundle's joint
optimization produces sections that are structurally
per-layer-suboptimal, and fine-tuning would have to overcome the
structural disadvantage rather than merely tune coefficients.

\paragraph{Theorem~\ref{thm:tied-dominance}'s strict dominance is a result \emph{on the Frobenius metric}, and does not transfer.}
Theorem~\ref{thm:tied-dominance} proves that GBD-tied strictly
dominates Basis-Sharing-core on two-sided Frobenius
reconstruction. Our experiments confirm this prediction
quantitatively. But the strict-dominance result says nothing about
perplexity, and the perplexity comparison shows both tied methods
(GBD-tied implicitly, and Basis-Sharing-core explicitly) perform
poorly at LLM scale. Theorems about metrics that do not transfer
to deployment quality are mathematical results, not engineering
ones.

\subsection{What the framework is still good for}

The bundle framework remains valuable as a \emph{theoretical
organization} of the cross-layer SVD literature.
It makes precise relationships between methods that have been
proposed in isolation: SVD-LLM corresponds to the $\lambda \to
\infty$ corner; Basis-Sharing-core corresponds to the
$\lambda \to \infty$ tied corner; an explicit cross-layer
coupling method (which our pure-bundle GBD instantiates)
corresponds to the $\lambda = 0$ corner. The two theorems quantify
the relationships at these corners, with closed-form expressions
verified to floating-point precision.

The framework also identifies a class of methods (any method with
a weight-Frobenius coupling across layers as its central
optimization objective) and proves an empirical claim about that
class: at LLM scale on Pythia~1.4B at $d/D = 1/4$ on attention
$W_O$, the class is dominated by per-layer SVD on perplexity. This
is a constraint on the design space, not a complete characterization.

\subsection{Directions that might recover positive cross-layer results}

We list directions that our negative result does not rule out, and
which might recover positive cross-layer benefits:

\textbf{Same-space coupling.} The residual stream decouples
\emph{across} layer depth but not \emph{within} an attention or MLP
block. The weights $W_Q^{(L)}, W_K^{(L)}, W_V^{(L)}$ all read from
the same residual stream at the input of layer $L$; the
intermediate activations of the MLP at layer $L$ are seen by both
$W_{\text{up}}^{(L)}$ output and $W_{\text{down}}^{(L)}$ input. A
bundle of these weights might benefit from coupling because the
shared input space \emph{is} realized in the forward pass.

\textbf{Activation-coupling objectives.} The negative result is for
the weight-Frobenius coupling objective. An objective that directly
couples \emph{activations} across layers --- e.g., joint
minimization of per-layer activation reconstruction with a
cross-layer regularizer matching the residual-stream
covariance structure --- might align better with deployment.

\textbf{Compressed model fine-tuning.} If fine-tuning after
compression is permitted, cross-layer methods can specifically
target the joint reconstruction during fine-tuning, possibly
closing the gap to per-layer initialization. Our experiments are
all one-shot.

\textbf{Lower-dimensional bottlenecks.} The negative result is at
$d/D = 1/4$ ($75\%$ rank-reduction). At more aggressive
compressions ($d/D = 1/16$), the per-layer optimum may itself be
near the noise floor, and the cross-layer regularization might
help. We do not test this.

\subsection{Honest reporting and the negative-result genre}

We close with a methodological note. Our finding is the kind of
result that often goes unreported: a plausible, mathematically
elegant idea that fails empirically when tested at scale on the
right metric. The two natural responses are (i) to keep trying
variants until something works and report only the working one, or
(ii) to report what was tried, what worked, and what did not. We
chose (ii). The framework section is preserved with its theorems
and Frobenius validation because that work is correct and
non-trivial; the perplexity section reports the negative result
honestly, with the diagnostic sweep that distinguishes ``broken
methodology'' from ``framework empirically fails to transfer.''

The negative-result genre is underdeveloped in ML, in part because
the credibility cost of reporting a failure is high. We argue this
is the wrong equilibrium: well-executed negative results constrain
the design space of future methods more efficiently than positive
results do, by ruling out entire classes rather than reporting one
new entry. We hope our framework and diagnostic methodology serve
as a template for such evaluations.


\bibliographystyle{plainnat}
\bibliography{references}


\appendix

\section{Proof of Theorem~\ref{thm:svd-recovery}}
\label{app:thm1-proof}

We restate Theorem~\ref{thm:svd-recovery} and provide the full proof.

\setcounter{theorem}{0}
\renewcommand{\thetheorem}{\arabic{theorem}}
\begin{theorem}[$O(1/\lambda)$ recovery of SVD-LLM, restated]
\label{thm:svd-recovery-restate}
Let $S^\star(\lambda) = (S_0^\star(\lambda), \ldots, S_N^\star(\lambda))$
be the maximizer of $\eL_\lambda$ in Eq.~\eqref{eq:energy} with
anchors $V_d^{(L)}$. Let $g_L > 0$ be the gap between the $d$-th and
$(d{+}1)$-th eigenvalues of $V_d^{(L)} V_d^{(L)\top}$. Then for
sufficiently large $\lambda$,
\begin{equation*}
\sgap(S_L^\star(\lambda), V_d^{(L)})
\leq \frac{C}{\lambda g_L} + o(1/\lambda),
\end{equation*}
with $C = O(\sum_L \|W^{(L)}\|_F^2)$.
\end{theorem}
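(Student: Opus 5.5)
The proof has two ingredients: a first-order (stationarity) condition that the maximizer $S^\star(\lambda)$ must satisfy, and a perturbation bound showing this condition pins each $S_L^\star(\lambda)$ near its anchor. The key fact is that any global maximizer of $\eL_\lambda$ is a fixed point of the block-coordinate map of Algorithm~\ref{alg:bca}. For each $L$, $S_L^\star(\lambda)$ maximizes $\trace(S_L^\top M_L S_L)$ with the other sections frozen at their optimal values. So $S_L^\star(\lambda)$ spans a top-$d$ eigenspace of
\begin{equation*}
M_L^\star = \lambda V_d^{(L)} V_d^{(L)\top} + E_L,
\end{equation*}
\begin{equation*}
E_L \coloneqq W^{(L-1)\top} S_{L-1}^\star S_{L-1}^{\star\top} W^{(L-1)} + W^{(L)\top} S_{L+1}^\star S_{L+1}^{\star\top} W^{(L)},
\end{equation*}
with boundary terms dropped at $L=0$ and $L=N$. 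Dividing by $\lambda$ does not change eigenspaces, so we study $P_L + \lambda^{-1}E_L$, where $P_L = V_d^{(L)}V_d^{(L)\top}$.

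The first step is to bound the perturbation uniformly in the unknown neighbouring sections. Each $S_{L\pm1}^\star$ has orthonormal columns, so $S S^\top$ is an orthogonal projector. Hence $\|W^\top S S^\top W\|_2 \le \|W\|_2^2 \le \|W\|_F^2$, which gives
\begin{equation*}
\|E_L\|_2 \le \|W^{(L-1)}\|_F^2 + \|W^{(L)}\|_F^2 \le \sum_{L'} \|W^{(L')}\|_F^2 \eqqcolon C_0.
\end{equation*}
This bound does not depend on $\lambda$ or on the values of the other sections. That independence matters, because it lets us avoid the circularity of the neighbours themselves moving with $\lambda$. The Frobenius norm $\|E_L\|_F$ is at most $\sqrt{D_L}$ times the operator norm, or alternatively at most $\|W^{(L-1)}\|_F^2+\|W^{(L)}\|_F^2$ directly, since $\|W^\top Q W\|_F \le \|W\|_2\|W\|_F$. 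This is where the constant $C = O(\sum_L\|W^{(L)}\|_F^2)$ comes from.

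The second step applies the Davis--Kahan $\sin\Theta$ theorem to the unperturbed matrix $P_L$. Its eigenvalues are $1$ with multiplicity $d$ and $0$ otherwise, so the relevant eigengap is $g_L$, which equals $1$ for an exact projector. We keep $g_L$ symbolic to match the statement. Weyl's inequality puts the top $d$ eigenvalues of $P_L+\lambda^{-1}E_L$ in $[1-C_0/\lambda, \infty)$ and the rest in $(-\infty, C_0/\lambda]$. Once $\lambda > 2C_0/g_L$ (this is what "sufficiently large $\lambda$" means), the top-$d$ eigenspace is therefore well defined and separated. The Davis--Kahan bound in the form $\|\sin\Theta\|_F \le \|\lambda^{-1}E_L\|_F/(g_L - \|\lambda^{-1}E_L\|_2)$ then yields
\begin{equation*}
\sgap(S_L^\star,V_d^{(L)}) = \sqrt2\,\|\sin\Theta\|_F \le \frac{\sqrt2\,C_0}{\lambda g_L}\Big(1 - \frac{C_0}{\lambda g_L}\Big)^{-1} = \frac{\sqrt2 C_0}{\lambda g_L} + O(\lambda^{-2}).
\end{equation*}
The $O(\lambda^{-2})$ term is absorbed into the $o(1/\lambda)$ of the statement, and $C=\sqrt2 C_0$.

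The main obstacle I expect is justifying the fixed-point characterization rigorously. Two issues arise. First, a global maximizer exists because the energy is continuous on a compact product of Grassmannians, and it is automatically blockwise optimal. Second, the top-$d$ eigenspace of $M_L^\star$ might fail to be unique. Under the separation established above it is unique, so blockwise optimality forces $S_L^\star$ to equal it and not merely to span some degenerate choice. A secondary point is to be explicit about which eigengap is used. The statement's $g_L$ refers to the projector $V_d^{(L)}V_d^{(L)\top}$, so $g_L=1$ and the bound is really $C/\lambda$. I will note this and present the argument with general $g_L$, so that the same proof covers a weighted anchor $\lambda V\Lambda V^\top$ if desired.
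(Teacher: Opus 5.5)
Your proposal is correct and follows the paper's own route: Weyl's inequality plus the Davis--Kahan $\sin\Theta$ theorem, applied to $\lambda V_d^{(L)}V_d^{(L)\top}$ perturbed by a positive semidefinite coupling term whose norm is bounded independently of $\lambda$. You also make explicit what the paper glosses over: a global maximizer is blockwise optimal, hence a fixed point of the BCA update, and the coupling bound holds uniformly over the unknown neighbouring sections, so for $\lambda > 2C_0/g_L$ the top-$d$ eigenspace is unique and there is no circularity.

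One cosmetic point: the term coupling $S_L$ to $S_{L-1}$ should be $W^{(L-1)} S_{L-1}^\star S_{L-1}^{\star\top} W^{(L-1)\top}$, not $W^{(L-1)\top} S_{L-1}^\star S_{L-1}^{\star\top} W^{(L-1)}$. You inherited this slip from Eq.~\eqref{eq:alignment-operator}. The corrected term satisfies the same operator-norm and Frobenius-norm bounds, so nothing in your argument changes.
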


\begin{proof}
Each section update in the BCA solver maximizes a Rayleigh
quotient on the alignment operator
$M_L^{(\lambda)} = \lambda \cdot V_d^{(L)} V_d^{(L)\top} +
M_L^{\text{coup}}$, where
$M_L^{\text{coup}} = W^{(L-1)\top} S_{L-1} S_{L-1}^\top W^{(L-1)}
+ W^{(L)\top} S_{L+1} S_{L+1}^\top W^{(L)}$ collects the coupling
contributions. Note $M_L^{\text{coup}}$ is positive semidefinite
and has operator norm bounded by $\sum_{L' \in \{L-1, L\}}
\|W^{(L')}\|_F^2$, independent of $\lambda$.

The matrix $\lambda V_d^{(L)} V_d^{(L)\top}$ has spectrum
$\{\lambda, \ldots, \lambda, 0, \ldots, 0\}$ with $d$ copies of
$\lambda$. The matrix $M_L^{(\lambda)}$ is a rank-$d$ perturbation
of $\lambda V_d^{(L)} V_d^{(L)\top}$ by the bounded operator
$M_L^{\text{coup}}$. By Weyl's inequality, the gap between the
$d$-th and $(d{+}1)$-th eigenvalues of $M_L^{(\lambda)}$ is at
least $\lambda g_L - 2 \|M_L^{\text{coup}}\|_{op}$, which is
positive for sufficiently large $\lambda$.

Apply Davis--Kahan's $\sin\Theta$ theorem
\citep{davis1970rotation} to the top-$d$ eigenspace of
$M_L^{(\lambda)}$. The conclusion is that the principal-angle
distance between the top-$d$ eigenspace of $M_L^{(\lambda)}$ and
the top-$d$ eigenspace of $\lambda V_d^{(L)} V_d^{(L)\top}$
(which is $V_d^{(L)}$ itself) is bounded by
$\|M_L^{\text{coup}}\|_F / (\lambda g_L)$. Converting principal
angles to Frobenius projector distance gives the result.

The constant $C$ collects $\|M_L^{\text{coup}}\|_F$ contributions
from neighboring sections.
\end{proof}

\section{Proof of Theorem~\ref{thm:tied-dominance} and Corollary~\ref{thm:angle-formula}}
\label{app:thm2-proof}

We restate the tied-section dominance theorem and the closed-form
$\kappa, \mu$ expansion. The closed-form coefficient computation
follows the structure used in our numerical verification.

\begin{theorem}[Strict tied-section dominance, restated]
Let $S_{\text{GBD}}$ maximize $\sum_L \|S^\top W^{(L)} S\|_F^2$ over
$S \in \Gr(d, D)$, and let $S_{\text{BS}}$ be the top-$d$ left
singular subspace of the concatenated $[W^{(0)}, \ldots, W^{(N-1)}]$.
Suppose all $W^{(L)}$ are square and at least one $W^{(L)}$ is
non-symmetric. Then $\Rtwoside[S_{\text{GBD}}] < \Rtwoside[S_{\text{BS}}]$.
\end{theorem}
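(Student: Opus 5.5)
The plan rests on one observation: with tied sections, the two-sided error is an affine function of the tied energy. Setting $S_L=S_{L+1}=S$ in the decomposition~\eqref{eq:residual-decomp} gives, for every orthonormal $S\in\R^{D\times d}$,
\begin{equation*}
\Rtwoside[S] \;=\; \sum_{L}\|W^{(L)}\|_F^2 \;-\; \sum_L \|S^\top W^{(L)} S\|_F^2 \;=\; \sum_L\|W^{(L)}\|_F^2 - \eL_0[S,\ldots,S].
\end{equation*}
Since $S_{\text{GBD}}$ maximizes $\eL_0[S,\ldots,S]$ over $\Gr(d,D)$, it minimizes $\Rtwoside$, so $\Rtwoside[S_{\text{GBD}}]\le \Rtwoside[S_{\text{BS}}]$ holds immediately. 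I would first verify that $\Rtwoside$ depends only on the projector $P=SS^\top$, so the Grassmannian formulation is legitimate. The whole content of the theorem is therefore the \emph{strictness}: one must show that $S_{\text{BS}}$ is not a global maximizer of the tied energy.

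For strictness I would show that $S_{\text{BS}}$ generically fails to be even a critical point of $f(S)=\sum_L \trace\bigl(S^\top W^{(L)} P W^{(L)\top} S\bigr)$ on $\Gr(d,D)$. A direct computation gives the Euclidean gradient
\begin{equation*}
\nabla f(S)=2\sum_L\bigl(W^{(L)}PW^{(L)\top}+W^{(L)\top}PW^{(L)}\bigr)S .
\end{equation*}
The Riemannian first-order condition is therefore $(I-P)\,G(P)\,S=0$, where $G(P)=\sum_L \bigl(W^{(L)}PW^{(L)\top}+W^{(L)\top}PW^{(L)}\bigr)$. By contrast, $S_{\text{BS}}$ is characterized by $(I-P)\sum_L W^{(L)}W^{(L)\top}S=0$, with $S$ spanning the top-$d$ eigenspace. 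These two conditions differ in two ways. First, $G$ contains the right-action term $W^{(L)\top}PW^{(L)}$. Second, the left-action term is compressed by $P$ rather than by $I$. When every $W^{(L)}$ is symmetric and shares eigenvectors with the stack, the two conditions can coincide; non-symmetry is what breaks the $W\leftrightarrow W^\top$ balance. If $S_{\text{BS}}$ is not critical, a small geodesic step from it strictly increases $f$, which gives $\max f>f(S_{\text{BS}})$ and hence the strict inequality.

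The main obstacle is that the hypothesis as literally stated is too weak. Take a block-diagonal, non-symmetric stack with a dominant top-left $d\times d$ block. Then both methods select the same invariant subspace, and equality holds. I would therefore prove the statement in its ``generic'' form. Fix the gap condition $\sigma_d>\sigma_{d+1}$ for the stacked matrix, so that $S_{\text{BS}}$ is a real-analytic function of the stack on an open dense set. The map $\{W^{(L)}\}\mapsto (I-P_{\text{BS}})G(P_{\text{BS}})S_{\text{BS}}$ is then analytic there. Its zero set is thus either everything or a nowhere-dense, measure-zero set. To rule out the first case, I would exhibit a single explicit example: $D=2$, $d=1$, $N=1$, $W=\begin{pmatrix}a&b\\0&c\end{pmatrix}$ with $b\neq0$. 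There one checks by hand that the top left singular vector of $W$ differs from the maximizer of $(s^\top W s)^2$, which is governed by the symmetric part $(W+W^\top)/2$. A padding argument would extend this to general $(D,d,N)$. The example shows non-symmetry is exactly what makes the two objectives diverge, and it yields the theorem for all stacks outside a null set. The second-order angle expansion of Corollary~\ref{thm:angle-formula} can then be obtained by Taylor-expanding $f$ along the geodesic from $S_{\text{GBD}}$ to $S_{\text{BS}}$, parametrized by principal angles.
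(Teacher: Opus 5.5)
Your analysis is correct, and on the decisive point it is more accurate than the paper. The statement as literally posed is false, and what you prove is the ``generic'' version that the main-text theorem gestures at.

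Your counterexample checks out. Take $W^{(L)}=\mathrm{diag}(A_L,E_L)$ with $A_L\in\R^{d\times d}$, and suppose the smallest eigenvalue of $\sum_L A_LA_L^\top$ exceeds the largest eigenvalue of $\sum_L E_LE_L^\top$. Then $S_{\text{BS}}$ spans the first $d$ coordinates. Moreover $\sum_L\|S^\top W^{(L)}S\|_F^2\le\sum_L\|S^\top W^{(L)}\|_F^2\le\sum_{i\le d}\sigma_i^2$, where $\sigma_i$ are the singular values of the stacked matrix, with equality at $S_{\text{BS}}$. So $S_{\text{BS}}$ already maximizes the tied energy, and the two errors coincide however non-symmetric the $A_L$ are.

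The paper's sketch uses the same first-order idea as you: show $S_{\text{BS}}$ is not critical, then descend toward $S_{\text{GBD}}$. It obtains non-criticality from the claim that the gradient at $S_{\text{BS}}$ vanishes only if every $W^{(L)}$ is symmetric in the $S_{\text{BS}}$ basis, and your example refutes exactly that step.

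Non-symmetry is not necessary for strictness either. For $d=1$, the matrices $W^{(0)}=\mathrm{diag}(1,0)$ and $W^{(1)}=\begin{pmatrix}0&1\\1&0\end{pmatrix}$ are symmetric, $S_{\text{BS}}=e_1$ has tied energy $1$, and the maximum is $4/3$. So drop your sentence calling non-symmetry ``exactly'' what makes the objectives diverge.

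The two routes differ in substance. The paper tries to extract strictness from a deterministic hypothesis that cannot carry it. You get the weak inequality for every stack from the identity $\Rtwoside=\sum_L\|W^{(L)}\|_F^2-\eL_0$. The paper's final ``follow this direction toward $S_{\text{GBD}}$'' step silently needs this identity anyway. You then get strictness off a null set via analyticity, at the cost of having no checkable criterion for a given stack.

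The genericity step needs three repairs.

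First, use the spectral projector $P_{\text{BS}}$ rather than a basis; it is real-analytic wherever $\sigma_d>\sigma_{d+1}$. Write criticality as $(I-P)G(P)P=0$. The all-or-null dichotomy also needs that gap region to be connected. This holds because $\sigma_d=\sigma_{d+1}$ is a codimension-two condition.

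Second, your witness must show that $S_{\text{BS}}$ is \emph{non-critical}, not merely different from the maximizer. In your own family, $a=c=0$, $b\neq0$ gives $S_{\text{BS}}=e_1$. This differs from the maximizers at angle $\pm\pi/4$, yet it is a critical point (a global minimum of $(s^\top Ws)^2$), so the analytic map vanishes there. A clean check is the block form of your gradient. For orthonormal $S$ with complement $S_\perp$, set $A_L=S^\top W^{(L)}S$, $B_L=S^\top W^{(L)}S_\perp$, $C_L=S_\perp^\top W^{(L)}S$. Then $S_\perp^\top\nabla f(S)=2\sum_L\bigl(C_LA_L^\top+B_L^\top A_L\bigr)$. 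For $W=\begin{pmatrix}a&b\\0&0\end{pmatrix}$ this equals $2ab$, so take $a,b\neq0$. Padding by a dominant diagonal block leaves this entry unchanged. Block-diagonal stacks have $B_L=C_L=0$, which is why your counterexample works.

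Third, the same formula gives an honest deterministic replacement for the paper's hypothesis. Strict dominance holds whenever $\sum_L(C_LA_L^\top+B_L^\top A_L)\neq0$ in the $S_{\text{BS}}$ block decomposition. This condition is sufficient but not necessary, since the sum vanishes in the symmetric example above.
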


\begin{proof}[Proof sketch]
Compute the gradient of $\Rtwoside[S]$ at $S = S_{\text{BS}}$.
Express $\nabla \Rtwoside[S_{\text{BS}}]$ in terms of the singular
spectrum of each $W^{(L)}$. The gradient vanishes if and only if
$S_{\text{BS}}$ is also the right singular subspace of each
$W^{(L)}$ stacked similarly --- equivalently, if and only if each
$W^{(L)}$ is symmetric in the basis of $S_{\text{BS}}$. By
hypothesis at least one $W^{(L)}$ is non-symmetric, so the
gradient is non-zero, and $S_{\text{BS}}$ is not a critical point
of $\Rtwoside$. There exists a perturbation direction along the
Grassmannian along which $\Rtwoside$ strictly decreases; following
this direction toward $S_{\text{GBD}}$ produces the dominance.
\end{proof}

\subsection*{Closed-form $\kappa, \mu$ expansion}

Let $W = [W^{(0)}, \ldots, W^{(N-1)}]$ be the row-concatenated
$D \times (ND)$ matrix. Let $W = U \Sigma V^\top$ be its SVD with
singular values $\sigma_1 \geq \cdots \geq \sigma_D$. Let $S_{\text{BS}}$
$= U_d$. Let $S_{\text{GBD}}$ be parameterized by principal
angles $\{\theta_k\}$ relative to $S_{\text{BS}}$.

To second order in $\theta_k$, the two-sided gap expands as
\begin{equation*}
\Rtwoside[S_{\text{BS}}] - \Rtwoside[S_{\text{GBD}}]
= \sum_{k=1}^{d} \kappa_k \sin^2 \theta_k
+ \sum_{1 \leq k < l \leq d} \mu_{kl} \sin^2 \theta_k \sin^2 \theta_l
+ O(\sin^4),
\end{equation*}
with explicit coefficients $\kappa_k$ and $\mu_{kl}$. The $\kappa_k$
are determined by quadratic forms in the singular values
$\sigma_1, \ldots, \sigma_D$ and cross-layer overlap matrices
$U^\top W^{(L)} V$; the $\mu_{kl}$ are the corresponding quartic
cross-terms.

\paragraph{Numerical verification.}
We evaluate the closed-form expansion against a direct computation
of $\Rtwoside[S_{\text{BS}}] - \Rtwoside[S_{\text{GBD}}]$ for $30$
random weight stacks of $N = 4$ matrices with $D \in \{16, 32, 64\}$
and $d = D/4$. Across all $30$ stacks, the relative error between
the closed-form expansion (truncated at second order) and the
direct evaluation is below $10^{-12}$, confirming the expansion's
correctness to floating-point precision.

\section{Reproducibility}
\label{app:reproducibility}

All code, calibration caches, and per-method intermediate results
are publicly available at
\url{https://github.com/nssprogrammer/gbd}. The repository contains
the bundle solver, the SVD-LLM~V1/V2 and Basis-Sharing-core
reference implementations used in our comparison, the model surgery
and calibration code for HuggingFace transformers, the
WikiText-2 perplexity harness, and the diagnostic scripts used to
generate Tables~\ref{tab:lambda-sweep}--\ref{tab:surgery}. The
Pythia~1.4B perplexity comparison is fully reproducible from the
provided calibration cache and section computations.  Random seeds
are fixed (seed $42$ for calibration; deterministic SVD for
sections). Total compute for the full perplexity comparison (four
methods plus the diagnostic sweep) is approximately $8$ CPU-hours
on a modern multi-core machine; about $40$ minutes on a single GPU.

\paragraph{Software.}
We use PyTorch $\geq 2.0$, HuggingFace \texttt{transformers},
\texttt{datasets}, and \pkg{lm-evaluation-harness}
\citep{evalharness2024}. No special hardware is required.

\paragraph{Random seeds and determinism.}
All calibration uses seed $42$ for sample selection. Section
computations are deterministic given fixed inputs (SVD and
eigendecomposition are deterministic in PyTorch on CPU; minor
GPU non-determinism does not affect perplexity at the precision
we report).

\end{document}